\newcommand{\Ac}{{\mathcal{A}}}
\newcommand{\Dc}{{\mathcal{D}}}
\newcommand{\Fc}{{\mathcal{F}}}
\newcommand{\Hc}{{\mathcal{H}}}
\newcommand{\Lc}{{\mathcal{L}}}
\newcommand{\Mc}{{\mathcal{M}}}
\newcommand{\Nc}{{\mathcal{N}}}
\newcommand{\Sc}{{\mathcal{S}}}
\newcommand{\Wc}{{\mathcal{W}}}
\newcommand{\Xc}{{\mathcal{X}}}
\newcommand{\Yc}{{\mathcal{Y}}}
\newcommand{\Zc}{{\mathcal{Z}}}
\newcommand{\Eb}{{\mathbb{E}}}
\newcommand{\Gb}{{\mathbf{G}}}
\newcommand{\Yb}{{\mathbf{Y}}}
\newtheorem{prop}{Proposition}
\newtheorem{definition}{Definition}
\newtheorem{lemma}{Lemma}
\newcommand{\Var}{{\mathrm{Var}}}
\newcommand\numberthis{\addtocounter{equation}{1}\tag{\theequation}}
\newcommand{\impliesabove}[1]{\stackrel{\mathclap{\mbox{#1}}}{\implies}}
\newcommand{\cdf}{{\mathbb{F}}}
\newcommand{\fs}{\mathbf{H}}
\newcommand{\xs}{\mathbf{X}}
\newcommand{\xadvg}{{\xs_\Sc}}
\newcommand{\xadvgi}{{\xs_{\Sc_i}}}
\newcommand{\rs}{{\mathbf{R}}}
\newcommand{\ys}{{\mathbf{Y}}}
\newcommand{\ysx}{{\mathbf{Y}}}
\newcommand{\zs}{{\mathbf{Z}}}
\newcommand{\fd}{h}
\newcommand{\fdr}{\bar{h}}
\newcommand{\fcdf}{{\cdf_{\fs[x](\ysx)}}}
\newcommand{\fcdfs}{{\cdf_{\fs[\xs](\ys)}}}
\newcommand{\fcdfd}{{\cdf_{\fs[x](y)}}}
\newcommand{\cdfjoint}{{\cdf_{\xs\ys}}}
\newcommand{\cdfyx}{{\cdf_{\ys \mid x}}}
\newcommand{\ucdf}{{\cdf_\mathbf{U}}}
\newcommand{\dw}{{d_{\mathrm{W1}}}}
\newcommand{\intr}{{\int_{r=0}^1}}
\newcommand{\intc}{{\int_{c=0}^1}}
\newcommand{\err}{{\mathrm{err}}}
\renewcommand{\Dc}{{\mathbf{\mathcal{D}}}}
\newcommand{\eqdef}{\mathrel{\stackrel{\makebox[0pt]{\mbox{\scriptsize def}}}{=}}}
\newcommand{\simiid}{\mathrel{\stackrel{\makebox[0pt]{\mbox{\scriptsize i.i.d.}}}{\sim}}}
\def\shownotes{1}  \ifnum\shownotes=1
\newcommand{\authnote}[2]{$\ll$\textsf{\footnotesize #1: #2}$\gg$}
\newcommand{\authnote}[2]{}
\icmltitlerunning{Individual Calibration with Randomized Forecasting}
\begin{document}

\twocolumn[
\icmltitle{Individual Calibration with Randomized Forecasting}



\icmlsetsymbol{equal}{*}

\begin{icmlauthorlist}
\icmlauthor{Shengjia Zhao}{to}
\icmlauthor{Tengyu Ma}{to}
\icmlauthor{Stefano Ermon}{to}
\end{icmlauthorlist}

\icmlaffiliation{to}{Computer Science Department, Stanford University}

\icmlcorrespondingauthor{Shengjia Zhao}{sjzhao@stanford.edu}
\icmlcorrespondingauthor{Tengyu Ma}{tengyuma@stanford.edu}
\icmlcorrespondingauthor{Stefano Ermon}{ermon@stanford.edu}

\icmlkeywords{Machine Learning, ICML}

\vskip 0.3in
]



\printAffiliationsAndNotice{} 

\begin{abstract}
Machine learning applications often require calibrated predictions, e.g. a 90\% credible interval should contain the true outcome 90\% of the times. However, typical definitions of calibration only require this to hold on average, and offer no guarantees on predictions made on individual samples. Thus, predictions can be systematically over or under confident on certain subgroups, leading to issues of fairness and potential vulnerabilities. 
We show that calibration for individual samples is possible in the regression setup if the predictions are randomized, i.e. outputting randomized credible intervals. Randomization removes systematic bias by trading off bias with variance. We design a training objective to enforce individual calibration and use it to train randomized regression functions. The resulting models are more calibrated for arbitrarily chosen subgroups of the data, and can achieve higher utility in decision making against adversaries that exploit miscalibrated predictions.  
\end{abstract}

\section{Introduction}


\begin{figure*}
    \centering
    \begin{tabular}{cc}
    \includegraphics[width=0.48\linewidth]{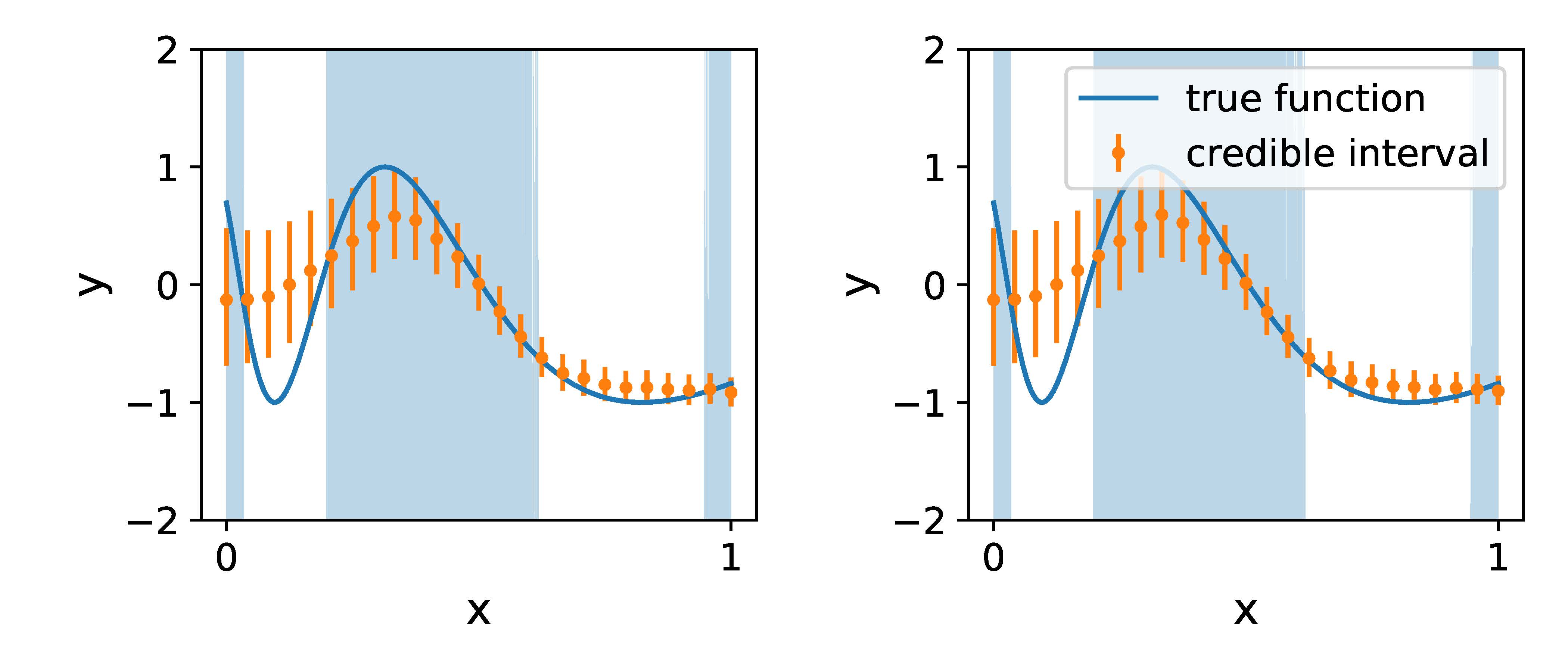} &
    \includegraphics[width=0.48\linewidth]{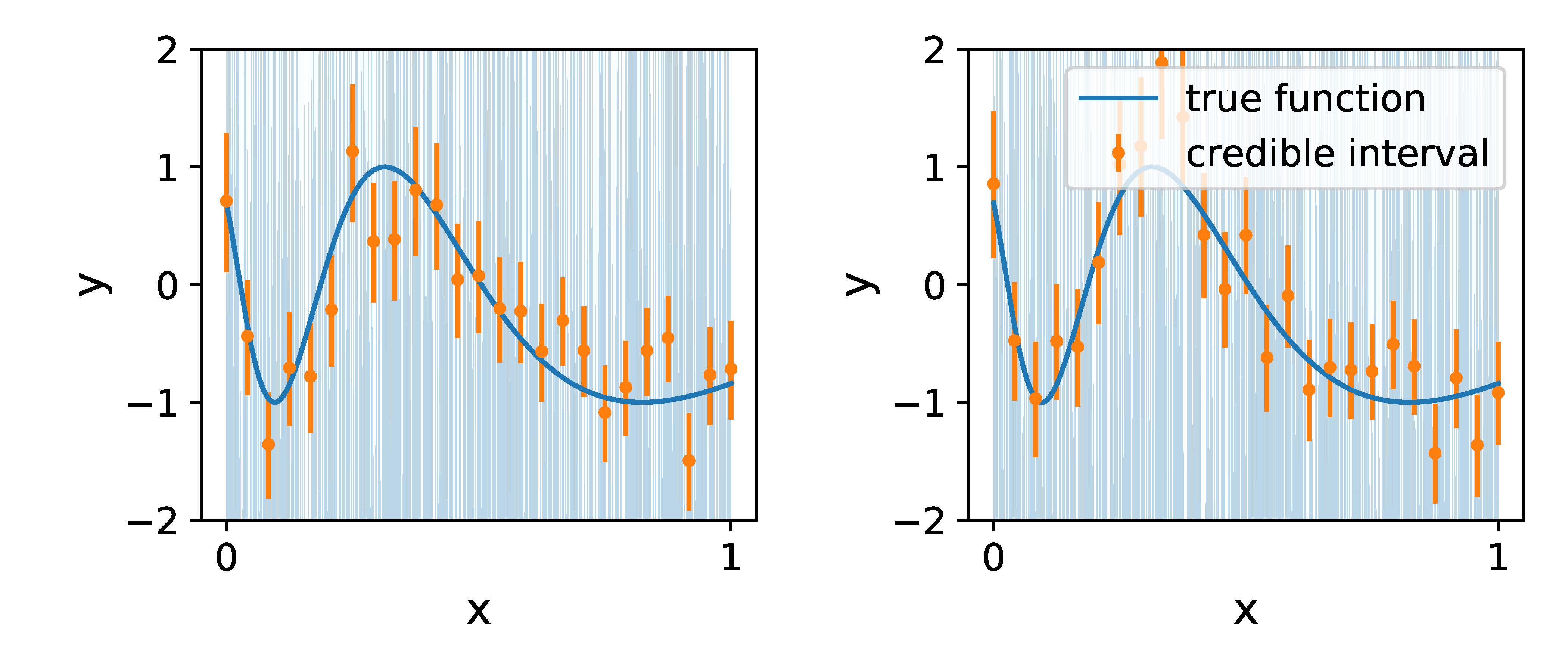} 46
    \end{tabular}
    \vspace{-5mm}
    \caption{Example of deterministic forecaster that is calibrated on average (\textbf{left}) and randomized forecaster that is individually calibrated (\textbf{right}). We plot the {\color{orange} 80\% credible interval} and the {\color{orange} median (orange dot)} the forecaster outputs, and {\color{cyan} shade in cyan the area} where the {\color{orange} predicted median} is less than the {\color{blue} true function value}. \textbf{Left}: the deterministic forecaster outputs a fixed credible interval (the left 2 plots are identical) and and can be miscalibrated on sub-groups of the data samples (e.g. it is not calibrated for the samples in the {\color{cyan} shaded area}, because the {\color{orange} predicted median} is always less than the {\color{blue}true function value}). 
    \textbf{Right}: The randomized forecaster outputs a different credible interval each time (the right 2 plots are different), and can remove systematic miscalibration on sub-groups of data samples. (The shaded area becomes random.)
    } 
    \label{fig:toy_example}
\end{figure*}

Many applications of machine learning, such as safety-critical systems and medical diagnosis, require accurate estimation of the uncertainty associated with each prediction. 
Uncertainty is typically represented using a probability distribution on the possible outcomes.
To reflect the underlying uncertainty, these probabilities should be calibrated~\citep{cesa2006prediction,vovk2005algorithmic,guo2017calibration}. In the regression setup, for example, the true outcome should be below the predicted 50\%  quantile (median) roughly 50\% of the times~\citep{kuleshov2018accurate}. 


However, even when the probability forecaster is calibrated, it is possible that the true outcome is \emph{always} below the predicted median on a subgroup (e.g., men), and \emph{always} above the predicted median for another (e.g., women). 
In other words, the standard notion of calibration is a property that needs to hold only \emph{on average across all predictions} made (i.e., on the set of all input data points). 
The predicted probabilities can still be highly inaccurate for \emph{individual} data samples. 
These predictions can lead to unfair or otherwise suboptimal decisions. For example, a bank might over-predict credit risk for one gender group and unfairly deny loans, or under-predict credit risk for a group that can then exploit this mistake to their advantage. 
%
Group calibration~\citep{kleinberg2016inherent} partly addresses the short-comings of average calibration by requiring calibration on pre-specified groups 
(e.g. men and women). In particular, \citep{hebert2017calibration} achieves calibration on any group that can be computed from input by a small circuit. However, these methods are not applicable when the groups are unknown or difficult to compute from the input. For example, groups can be defined by features that are unobserved e.g. due to personal privacy. 
%
%

Ideally, we would like forecasters that are calibrated on each individual sample. 
Our key insight is that individual calibration is possible when the probability forecaster is itself randomized.
Intuitively, 
a randomized forecaster can output random probabilistic forecasts (e.g., quantiles) --- it is the predicted quantile that is randomly above or below a fixed true value with the advertised probability (see Figure~\ref{fig:toy_example}). 
 Randomization can remove systematic miscalibration on any group of data samples. 
%
Useful forecasters also need to be sharp --- the predicted probability should be concentrated around the true value. We design a concrete learning objective that enforces individual calibration. Combined with an objective that enforces sharpness, such as traditional log-likelihood, we can learn forecasters that trade-off calibration and sharpness \textit{Pareto optimally}. The objective can be used to train any prediction model that takes an additional random source as input, such as deep neural networks. 

We assess the benefit of forecasters trained with the new objective on two applications: fairness and decision making under uncertainty against adversaries. 
Calibration on protected groups traditionally has been a definition for fairness~\citep{kleinberg2016inherent}. On a UCI crime prediction task, we show that forecasters trained for individual calibration achieve lower calibration error on protected groups without knowing these groups in advance. 

For decision making we consider the Bayesian decision strategy --- choosing actions that minimize expected loss under the predicted probabilities. We prove strong upper bounds on the decision loss when the probability forecaster is  calibrated on average or  individually. However, when the input data distribution changes, forecasters calibrated on average lose their guarantee, while  
individually calibrated forecasters are less affected. 
We support these results by simulating a game between a bank and its customers. The bank approves loans based on predicted credit risk, 
and customers exploit any mistake of the bank, i.e. the distribution of customers change adversarially for the bank. 
We observe that the bank incurs lower loss when the credit risk forecaster is trained for individually calibration. 





\begin{figure*}
    \centering
    \hspace{-4mm}
    \begin{tabular}{c|cp{.1cm}p{.1cm}cp{.1cm}p{.1cm}cp{.1cm}p{.1cm}c}
    & \parbox{0.14\linewidth}{ \centering Individual \\ calibration \\ (Definition~\ref{def:individual_calibration})} 
    & $\impliesabove{\text{Thm~\ref{thm:pai_vs_calibration}}}$ &
    & \parbox{0.14\linewidth}{\centering Adv group \\ calibration \\(Definition~\ref{def:adv_group_calibration})} 
    & {$\implies$} &
    & \parbox{0.12\linewidth}{\centering Group \\ calibration \\ (Definition~\ref{def:group_calibration})} 
    & $\implies$ &
    & \parbox{0.12\linewidth}{\centering Average \\ calibration\\ (Definition~\ref{def:average_calibration})} 
    \\ \hline
    \parbox{0.125\linewidth}{\centering R.V. that needs \\ to be uniform} 
    & $\fs[x](\ysx)$ $\forall x \in \Xc$ 
    &&& \parbox{0.12\linewidth}{\centering $\fs[\xadvg](\ys)$ $\forall S \subset \Xc$} 
    &&& \parbox{0.12\linewidth}{\centering $\fs[\xadvgi](\ys)$ \\  $i=1, \cdots, n $} 
    &&& $\fs[\xs](\ys)$ 
    \\ \hline
    Deterministic
    & \multicolumn{2}{c}{Not Achievable (Prop~\ref{prop:impossible_deterministic})}
    & \multicolumn{3}{c}{Not achievable (Prop~\ref{prop:impossible_deterministic_advgroup})} 
    & \multicolumn{3}{c}{Achievable}
    && Achievable
    \\ \hline
    Randomized 
    & \multicolumn{2}{c}{Achievable* (Thm~\ref{thm:paicmono_imply_paic})}
    & \multicolumn{3}{c}{Achievable* (Thm~\ref{thm:pai_vs_calibration})}
    && Achievable  
    &&&  Achievable  
    \end{tabular}
    \caption{Relationships between different notions of calibration, ordered from strongest (individual calibration) to the weakest (average calibration). *With caveats in certain situations, see additional discussion in Section~\ref{sec:appendix_additional_discussion}. 
    } 
    \label{fig:comparison}
\end{figure*}

\section{Preliminary: Forecaster and Calibration}


\subsection{Notation}
We use bold capital letters $\xs, \ys, \zs, \fs$ to denote random variables, lower case letters $x, y, z, h$ to denote fixed values, and $\Xc, \Yc, \Zc, \Hc$ to denote the set of all possible values they can take.

For a random variable $\zs$ on $\Zc$ we will use $\cdf_\zs$ to denote the distribution of $\zs$, and denote this relationship as $\zs \sim \cdf_\zs$. If $\Zc$ is an interval in $\mathbb{R}$ we 
overload $\cdf_\zs$ to denote the cumulative distribution function (CDF) $\Zc \to [0, 1]$ of $\zs$. 

$\xs$ is a random variable on $\Xc$, if $\Sc \subset \Xc$ is a measurable set and $\Pr[\xs \in \Sc] > 0$, we will use the notation $X_\Sc$ as the random variable distributed as $\xs$ conditioned on $\xs \in \Sc$. 

Let $\Yc$ be an interval in $\mathbb{R}$, we use $\Fc(\Yc)$ denote the set of all CDFs on $\Yc$. 
We use $d: \Fc([0, 1]) \times \Fc([0, 1]) \to \mathbb{R}$ to denote a distance function between two CDFs on $[0, 1]$. For example, the Wasserstein-1 distance is defined for any $\cdf, \cdf' \in \Fc([0, 1])$ as 
\[ d_{W1}(\cdf, \cdf') = \int_{r=0}^1 \lvert \cdf(r) - \cdf'(r) \rvert dr \]
This is the distance we will use throughout the paper. We provide results for other distances in the appendix. 

\subsection{Problem Setup}
Given an input feature vector $x \in \Xc$ we would like to predict a distribution on the label $y \in \Yc$. We only consider regression problems where $\Yc$ is an interval in $\mathbb{R}$. 

Suppose there is a true distribution $\cdf_\xs$ on $\Xc$, and a random variable $\xs \sim \cdf_\xs$. 
For each $x \in \Xc$, we also assume there is some true distribution $\cdfyx$ on $\Yc$, and a random variable $\ys \sim \cdfyx$. 
As a convention, the random variable $\ys$ only appears in any expression along side $x$ or $\xs$. Its distribution is always defined conditioned on $x$ (or after we have randomly sampled $x \sim \cdf_\xs$). 


A \textbf{probability forecaster} is a function $\fd: \Xc \to \Fc(\Yc)$ that maps an input $x\in \Xc$ to a continuous CDF $h[x]$ over $\Yc$.  Note that $\fd[x]$ is a CDF, i.e. it is a function that takes in $y\in \Yc$ and returns a real number $\fd[x](y) \in [0, 1]$. We use $[\cdot]$ to denote function evaluation for $x$ and $(\cdot)$ for $y$. 

Let $\Hc \eqdef \lbrace \fd: \Xc \to \Fc(\Yc) \rbrace$ be the set of possible probability forecasters.
We consider \textbf{randomized} forecasters $\fs$ which is a random function taking values in $\Hc$. 

To clarify notation, $\fs[\xs](\ys), \fs[x](\ysx)$ and $\fs[x](y)$ are all random variables taking values in $[0, 1]$, but they are random variables on different sample spaces. 
\begin{itemize}
    \item $\fs[\xs](\ys)$ is a random variable on the sample space $\Hc \times \Xc \times \Yc$ --- All of $\fs, \xs, \ys$ are random.
    \item $\fs[x](\ysx)$ is a random variable on the sample space $\Hc \times \Yc$, while $x$ is just a fixed value in $\Xc$.  
    \item $\fs[x](y)$ is a random variable on the sample space $\Hc$, while $x, y$ are just fixed values in $\Xc \times \Yc$.
\end{itemize}
Similarly $\fd[\xs](\ys), \fd[x](\ys)$ are also random variables taking values in $[0, 1]$, while $h[x](y)$ is just a number in $[0, 1]$ (there is no randomness). 
This difference will be crucial to distinguishing different notions of calibration. 

We use $\fcdfs, \fcdf$ and $\fcdfd$, etc, to denote the CDF of these $[0, 1]$-valued random variables. These are in general different CDFs because the random variables have different distributions (they are not even defined on the same sample space).


If the $\fs$ takes some fixed value in $h \in \Hc$ with probability $1$, we call it a deterministic forecaster. Because deterministic forecasters are a subset of randomized forecasters, we will use $\fs$ to denote them as well.


\subsection{Background: Perfect Probability Forecast}

We consider several criteria that a good probability forecaster $\fs$ (randomized or deterministic) should satisfy, and whether they are achievable. 


Given some input $x \in \Xc$, an ideal forecaster should always output the CDF of the true conditional distribution $\cdfyx$. We call such a forecaster a ``perfect forecaster''. 

However, learning an (approximately) perfect forecaster from training data is almost never possible. 
Usually each $x \in \Xc$ appear at most once in the training set (e.g. it is unlikely for the training set to contain identical images). It would be almost impossible to infer the entire CDF $\cdf_{Y \mid x}$ from a single sample $y \sim \cdf_{Y \mid x}$~\citep{vovk2005algorithmic} without strong assumptions. 




\subsection{Individual Calibration}

Because perfect probability forecasters are difficult to learn, we relax our requirement, and look at which desirable property of a perfect forecaster to emulate. 

We first observe that for some $x$, when the random variable $\ysx$ is truly drawn from a continuous CDF $\cdfyx \in \Fc(\Yc)$, by the inverse CDF theorem, $\cdfyx(\ysx)$ should be a random variable with uniform distribution in $[0, 1]$ --- As a  notation reminder, $\cdfyx$ is a fixed (CDF) function $\Yc \to [0, 1]$, $\ysx$ is a random variable taking values in $\Yc$. Therefore, $\cdfyx(\ysx)$ is a random variable taking values in $[0, 1]$. Also recall the convention that whenever $\ys$ appears in an expression, its distribution is always conditioned on $x$, i.e. $\ys \sim \cdfyx$. 


If $\fs$ is indeed a perfect forecaster, then $\forall x \in \Xc$, $\fs[x]$ should always equal the true CDF: $\fs[x] = \cdfyx$. Therefore $\fs[x](\ysx)$ is a uniformly distributed random variable. In other words, $\fcdf$ is the CDF function of a uniform random variable. Conversely, we can require this property for any good forecaster. 
Formally, let $d: \Fc([0, 1]) \times \Fc([0, 1]) \to \mathbb{R}^+$ be any distance function (such as the Wasserstein-1 distance) between CDFs over $[0,1]$. For convenience we use $\ucdf$ to denote the CDF of a uniform random variable in $[0, 1]$. 
We can measure 
\[ \err_\fs(x) \eqdef d(\fcdf, \ucdf) \]
and if $\err_\fs(x) = 0$ for all $x \in \Xc$, we say the forecaster $\fs$ satisfies individual calibration. 


\paragraph{Deterministic vs randomized forecasters.} 
Individual calibration for randomized forecasters has a weaker interpretation compared to deterministic forecasters. For example, the randomized forecaster that achieves individual calibration in Figure 1 has no deterministic counterpart. If one removes the randomness by averaging multiple random samples from $\fs[x]$, the individual calibration property is lost. 

In fact, if $\fs$ is deterministic (we denote it as $\fd$ instead), then $\err_\fd(x) = 0$ implies that $\fd[x]$ is the same CDF (almost everywhere) as $\cdfyx$. In other words, the forecaster has to produce exactly the correct distribution to satisfy individual calibration, which is limited by strong impossibility results in  \citep{barber2019limits,vovk2005algorithmic}. In fact, given finitely many data, we cannot even verify that a deterministic forecaster achieves individual calibration or not (Section 3). 

On the other hand, for most prediction tasks there always exists a randomized forecaster that trivially achieves individual calibration (Appendix~\ref{sec:appendix_trivial_mpaic}). Individual calibration along is not sufficient to guarantee that a randomized forecaster is useful, and must be supplemented with an additional "sharpness" objective. We will discuss this issue in Section 4.3. 

Even though individual calibration for randomized forecaster is a weaker definition, it has two key benefits. First, there is a sufficient condition to individual calibration that can be verified with a finite dataset (Section 4.1); second, with the additional "sharpness" objective,  it is useful for several real world applications (Section 5,6). 








\paragraph{Approximate Individual Calibration.} In practice individual calibration can only be achieved approximately, i.e. $\err_\fs(x) \approx 0$ for most values of $x \in \Xc$, which we formalize in the following definition.

\begin{definition}[Individual calibration for randomized forecasters] 
\label{def:individual_calibration}
A forecaster $\fs$ is $(\epsilon,\delta)$-probably approximately individually calibrated (PAIC) (with respect to distance metric $d$) if 
\[ \Pr \left[ \err_\fs(\xs)\leq \epsilon \right] \geq 1- \delta \] 
\end{definition}



This definition is intimately related to a standard definition of calibration for regression~\citep{gneiting2007probabilistic,kuleshov2018accurate}
which we restate slightly differently.

\begin{definition}[Average calibration]
\label{def:average_calibration} 
A forecaster $\fs$ is $\epsilon$-approximately average calibrated (with respect to distance metric $d$) 
if 
\[ d(\fcdfs, \ucdf) \leq \epsilon \]
\end{definition}
Note that $d(\fcdfs, \ucdf) = 0$ is equivalent to the original definition of calibrated regression in \citep{kuleshov2018accurate} 
\begin{align*}
    \fcdfs(c) = \Pr[\fs[\xs](\ys) \leq c] = c = \ucdf(c), \forall c \in [0, 1]
\end{align*} 
In words, under the ground truth distribution, $y$ should be below the $c$-th quantile of the predicted CDF exactly $c$ percent of the times. More generally, an $\epsilon$-approximately average calibrated forecaster with respect to Wasserstein-1 distance also has $\epsilon$ ECE (expected calibration error) --- a metric commonly used to measure calibration error~\citep{guo2017calibration}. For details see  Appendix~\ref{sec:appendix_ece}. 

Despite the similarity, individual calibration is actually much stronger compared to average calibration (Figure~\ref{fig:comparison}). Individual calibration requires $\fs[x](\ys)$ be uniformly distributed for \textit{every} $x$. average calibration only require this \textit{on average}: $\fs[\xs](\ys)$ is uniformly distributed only if $\xs \sim \cdf_\xs$ --- it may not be uniformly distributed if $\xs$ has some other distribution. For example, if $\Xc$ can be partitioned based on gender, the forecaster can be uncalibrated when $\xs$ is restricted to a particular gender. On the other hand, average calibration is fairly easy to achieve with a deterministic forecaster~\citep{kuleshov2018accurate} --- there is no need or clear benefit from using a randomized forecaster. 


\subsection{Group Calibration}
To address the short-coming of average calibration in Definition~\ref{def:average_calibration}, a stronger notion of calibration has been proposed~\citep{kleinberg2016inherent, hebert2017calibration}. 
We choose in advance measurable subsets $\Sc_1, \cdots, \Sc_k \subset \Xc$ such that $\Pr[\xs \in \Sc_i] \neq 0, \forall i\in [k]$ (for \citet{hebert2017calibration} these are sets that can be identified by a small circuit from the input), and define the random variables $\xs_{\Sc_1}, \cdots, \xs_{\Sc_k}$ (Recall that $\xadvgi$ is distributed by $\xs$ conditioned on $\xs \in \Sc_i$). 
\begin{definition}[Group Calibration]
\label{def:group_calibration}
A forecaster $\fs$ is $\epsilon$-approximately group calibrated w.r.t. distance metric $d$ and $\Sc_1, \cdots, \Sc_k \subset \Xc$ if 
\[ \forall i\in [k], d\left(\cdf_{\fs[\xadvgi](\ys)}, \ucdf\right) \leq \epsilon \]
\end{definition}

This can alleviate some of the shortcomings of average calibration. However, the groups must be pre-specified or easy to compute from the input features $\xs$.  
A much stronger definition (Figure~\ref{fig:comparison}) is group calibration for any subset of $\Xc$ that is sufficiently large. 
\begin{definition}[Adversarial Group Calibration]
\label{def:adv_group_calibration}
A forecaster $\fs$ is $(\epsilon,\delta)$-adversarial group calibrated (with respect to distance metric $d$) if for $\forall \Sc \subset \Xc$ such that $\Pr[\xs \in \Sc] \geq \delta$ we have 
\begin{align*} d\left(\cdf_{\fs[\xs_\Sc](\ys)}, \ucdf\right) \leq \epsilon \numberthis\label{eq:adv_group_calibration} \end{align*}
\end{definition}


\newcommand{\yes}{{\mathrm{yes}}}
\newcommand{\no}{{\mathrm{no}}}

\section{Impossibility Results for Deterministic Forecasters}
We first present results showing that individual calibration and adversarial group calibration are impossible to verify with a deterministic forecaster. Therefore, there is no general method to train a classifier to achieve individual calibration. This motivates the need for randomized forecasters. 




Given a finite dataset $\Dc = \lbrace (x_i, y_i) \rbrace$ and a deterministic forecaster $\fd: \Xc \to \Fc(\Yc)$, suppose some verifier $T(\Dc, \fd) \to \lbrace \yes, \no \rbrace$ aims to verify if $h$ is $(\epsilon,\delta)$-PAIC. 
We claim that no verifier can correctly decide it (unless $\epsilon\ge 1/4$ or $\delta=1$ which means the calibration is trivially bad).  
This proposition is for the Wasserstein-1 distance $d_{W1}$. Examples of other distances are given in the Appendix~\ref{sec:appendix_impossibility}.



\begin{restatable}{prop}{impossibledeterministic}
\label{prop:impossible_deterministic}
For any distribution $\cdf$ on $\Xc \times \Yc$ such that $\cdf_\xs$ assigns zero measure to individual points $\lbrace x \in \Xc \rbrace$ sample $\Dc = \lbrace (x_1, y_1), \cdots, (x_n, y_n) \rbrace \simiid \cdf$. 
 For any deterministic forecaster $h$ and any function
$ T(\Dc, \fd) \to \lbrace \yes, \no \rbrace $ 
such that 
\[ \Pr_{\Dc \sim \cdf}[T(\Dc, \fd) = \yes] = \kappa > 0\,, \]
there exists a distribution $\cdf'$ such that  (a) $\fd$ is not $\left( \epsilon, \delta \right)$-PAIC w.r.t $\cdf'$ for any $\epsilon < 1/4$ and $\delta < 1$, and (b)
\[ \Pr_{\Dc \sim \cdf'}[T(\Dc, \fd) = \yes] \geq \kappa \]
\end{restatable}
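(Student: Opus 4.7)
The plan is to exploit the fact that when $\cdf_\xs$ is atomless, a sample $\Dc \simiid \cdf$ almost surely contains $n$ distinct covariates $x_i$, so the verifier $T$ only ever sees one draw from each conditional $\cdf_{\ys \mid x_i}$ and cannot distinguish it from a point mass. Accordingly, I would construct $\cdf'$ by keeping the marginal $\cdf'_\xs = \cdf_\xs$ and replacing each conditional with a Dirac measure $\cdf'_{\ys \mid x} = \delta_{y^*(x)}$ for some measurable selector $y^*: \Xc \to \Yc$. This makes part~(a) follow from a one-line Wasserstein computation, and part~(b) then reduces to choosing $y^*$ so that the verifier still accepts with probability at least $\kappa$, which I would handle by the probabilistic method.

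For part~(a), a direct calculation gives, for any $a \in [0,1]$,
\[
\dw(\delta_a, \ucdf) = \int_0^1 \bigl| \mathbf{1}\{r \geq a\} - r \bigr|\, dr = \bigl(a - \tfrac{1}{2}\bigr)^2 + \tfrac{1}{4} \geq \tfrac{1}{4}.
\]
Thus under $\cdf'$ we have $\err_\fd(x) = \dw(\delta_{\fd[x](y^*(x))}, \ucdf) \geq 1/4$ for \emph{every} $x \in \Xc$, so $\Pr_{\cdf'_\xs}[\err_\fd(\xs) \leq \epsilon] = 0$ for all $\epsilon < 1/4$, which immediately falsifies $(\epsilon,\delta)$-PAIC for every $\epsilon < 1/4$ and $\delta < 1$.

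For part~(b), I would run the following coupling argument. Let $Y^\circ$ denote a random measurable map $\Xc \to \Yc$ whose pointwise marginals $Y^\circ(x) \sim \cdf_{\ys \mid x}$ are independent across $x$; such a random field is furnished by a regular conditional probability of $\cdf$ together with the Kolmogorov extension theorem. Because $\cdf_\xs$ is atomless, $x_1,\dots,x_n$ are distinct almost surely, and conditioned on these distinct values the vector $(Y^\circ(x_1),\dots,Y^\circ(x_n))$ has the same law as $(y_1,\dots,y_n)$ drawn from $\prod_i \cdf_{\ys \mid x_i}$. Hence the joint law of $\{(x_i, Y^\circ(x_i))\}$ coincides with that of $\Dc \simiid \cdf$, giving
\[
\E_{Y^\circ}\Bigl[\Pr_{\{x_i\}}\bigl[T(\{(x_i, Y^\circ(x_i))\}, \fd) = \yes\bigr]\Bigr] = \kappa,
\]
and fixing any realization $Y^\circ = y^*$ attaining at least this average yields the deterministic selector required by~(b).

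The hardest part is the selector step. The subtlety is that we need a \emph{single} function $y^*$ of $x$, not a position-dependent tuple $(u_1^*, \dots, u_n^*)$, since only the former defines a valid joint distribution on $\Xc \times \Yc$ from which one can draw i.i.d.\ samples. The coupling above resolves this precisely by invoking the a.s.\ distinctness of the $x_i$'s under the atomless assumption on $\cdf_\xs$, which is exactly why that assumption appears in the hypothesis. The remaining measurability issue---existence of the independent random field $Y^\circ$---is a standard exercise under Polish-space regularity of $\cdf$.
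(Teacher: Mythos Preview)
Your proof is correct and follows essentially the same approach as the paper: both construct $\cdf'$ by keeping the $\xs$-marginal and replacing each conditional by a Dirac mass $\delta_{g(x)}$, verify the $1/4$ lower bound on $\dw(\delta_a,\ucdf)$, and then use the probabilistic method over a random selector $g$ (your $Y^\circ$) together with almost-sure distinctness of the $x_i$ under the atomless assumption to find a realization with acceptance probability at least $\kappa$. Your treatment is slightly more explicit about the Wasserstein computation and the measurability of the random field, but the argument is the same.
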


This additionally implies that no learning algorithm can guarantee to produce an individually calibrated forecaster. 
because otherwise the learning algorithm and its guarantee can serve as a verifier. 
Proofs of the proposition and a similar negative result for adversarial group calibration are in Appendix~\ref{sec:appendix_impossibility}.
In addition to impossibility of verification, it is also known (in the conformal prediction setup) that no forecaster can guarantee individual calibration in a non-trivial way~\citep{barber2019limits}. For a discussion see related works. 

In light of these negative results, there are two options: one can make additional assumptions about the true distribution $\cdfjoint$, such as Lipschitzness. However, these assumptions are usually hard to verify, and their usefulness diminishes as the dimensionality of $\Xc$ increases. We propose an alternative: in contrast to deterministic forecasters, there is a sufficient condition for individual calibration for \emph{randomized} forecasters.
The sufficient condition is verifiable can be conveniently converted into a training objective. 

\section{Individual Calibration with Randomized Forecasting} 
\subsection{Reparameterized Randomized Forecaster}

We will consider randomized forecasters that are deterministic functions applied on the input feature vector $x$ and some fixed random seeds $R$. (For readers familiar with variational Bayesian inference~\citep{kingma2013auto}, this is reminiscent of the reparameterization trick.)
Concretely, we choose a deterministic function $\fdr: \Xc \times [0, 1] \to \Fc(\Yc)$, and let $ \rs \sim \ucdf$.  Define the randomized forecaster $\fs[x]$ as 
\begin{align} \fs[x] = \fdr[x, \rs] \label{eqn:reparam}\end{align}

In addition, we would like $\fdr[x, r]$ to be a monotonic function of $r$ for all $x$. Any continuous function that is not monotonic  can be transformed into one by shuffling $r$.


\subsection{Sufficient Condition for Individual Calibration}
\label{sec:appendix_additional_discussion}

In this subsection, we introduce sufficient (but not necessary) conditions of individual calibration for 
randomized forecasters defined in Eq.~\eqref{eqn:reparam}.
First, recall that the definition of individual calibration requires $\fs[x](\ys) := \fdr[x, \rs](\ys)$ to be a uniform distribution for most of $x \in \Xc$ sampled from $\cdf_\xs$. This condition is hard to verify given samples, because for each $x$ in the training (or test) set, we typically only observe a single corresponding label $y \sim \cdf_{\ys \mid x}$. 

As an alternative, we propose to verify whether $\fdr[x, \rs](y)$ is uniformly distributed 
(for the unique sample $y$). Therefore we introduce a stronger but verifiable condition: 
\begin{align*}
\fdr[x, \rs](y) \text{ is uniformly distributed}  \numberthis\label{eq:weaker_cond}
\end{align*} 
for most (random) choices of $x, y$ under $\cdf_{XY}$.


The benefit is that the condition --- $\fdr[x, \rs](y)$ is uniformly distributed --- can be written in an equivalent form when $\fdr[x,r]$ is a monotonic function in $r$ (proved in Theorem~\ref{thm:paicmono_imply_paic})
\[ \fdr[x, r](y) = r, \forall r \in [0, 1], \]
We formalize a relaxed version of this condition (allowing for approximation errors) as follows: 

\begin{definition}
\label{def:paic_monotonic}
A forecaster $\fdr$ is $(\epsilon,\delta)$-monotonically probably approximately individually calibrated (mPAIC)   if
\[ \Pr \left[ \lvert \fdr[\xs, \rs](\ys) - \rs \rvert  \geq \epsilon\right] \leq \delta \]
\end{definition}

Note that even though we want to achieve $\fdr[x, r](y)=r, \forall r$, this does not mean that $\fdr$ ignores the input $x$  --- $\fdr[x, r](.)$ has the special form of a CDF, so $\fdr[x, r]$ must output a CDF concentrated around the observed label $y$ to satisfy mPAIC.

The following theorem formalizes our previous intuition that monotonic individual calibration (mPAIC) is obtained by imposing additional restrictions on individual calibration (PAIC) --- mPAIC is a sufficient condition for PAIC. 


\begin{restatable}{theorem}{paicmono}
\label{thm:paicmono_imply_paic}
If $\fdr$ is $(\epsilon, \delta)$-mPAIC, then for any $\epsilon' > \epsilon$ it is $(\epsilon', \delta(1-\epsilon)/(\epsilon'-\epsilon))$-PAIC with respect to the 1-Wasserstein distance. 
\end{restatable}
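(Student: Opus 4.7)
}
The plan is to bound $\err_\fs(x)$ pointwise by a conditional expectation of $|\fdr[x,\rs](\ys)-\rs|$, and then turn the marginal tail bound supplied by mPAIC into a pointwise tail bound via a shifted Markov inequality. The reason the bound comes out as $\delta(1-\epsilon)/(\epsilon'-\epsilon)$ rather than the naive $\delta/\epsilon'$ is that we apply Markov to the positive part $(g(\xs)-\epsilon)^+$ rather than to $g(\xs)$ directly.

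First I would recall the coupling characterization of the Wasserstein-$1$ distance: for any two real-valued random variables $A,B$ on a common probability space, $d_{W1}(\cdf_A,\cdf_B)\le \E[|A-B|]$. Applied with $A=\fdr[x,\rs](\ysx)$ and $B=\rs$ (both defined on the joint sample space of $\rs$ and $\ysx$, and $B$ uniform on $[0,1]$ since $\rs\sim\ucdf$), this yields the pointwise bound
\[
\err_\fs(x)\;=\;d_{W1}\bigl(\fcdf,\ucdf\bigr)\;\le\;g(x)\;:=\;\E_{\rs,\ysx}\bigl[\,|\fdr[x,\rs](\ysx)-\rs|\,\bigr].
\]
So it suffices to control $\Pr[g(\xs)\ge \epsilon']$.

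Next, let $Z:=|\fdr[\xs,\rs](\ys)-\rs|\in[0,1]$, so the mPAIC hypothesis says $\Pr[Z\ge\epsilon]\le\delta$, and by construction $\E[Z\mid \xs]=g(\xs)$. I would control the positive-part expectation: since $x\mapsto(x-\epsilon)^+$ is convex, Jensen gives $(g(\xs)-\epsilon)^+\le \E[(Z-\epsilon)^+\mid\xs]$, hence
\[
\E\bigl[(g(\xs)-\epsilon)^+\bigr]\;\le\;\E\bigl[(Z-\epsilon)^+\bigr]\;=\;\E\bigl[(Z-\epsilon)\mathbf{1}\{Z\ge\epsilon\}\bigr]\;\le\;(1-\epsilon)\,\Pr[Z\ge\epsilon]\;\le\;(1-\epsilon)\delta,
\]
using $Z\le 1$ in the penultimate inequality.

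Finally I would apply Markov's inequality to the nonnegative random variable $(g(\xs)-\epsilon)^+$ at level $\epsilon'-\epsilon>0$:
\[
\Pr\bigl[g(\xs)\ge \epsilon'\bigr]\;=\;\Pr\bigl[(g(\xs)-\epsilon)^+\ge \epsilon'-\epsilon\bigr]\;\le\;\frac{\E[(g(\xs)-\epsilon)^+]}{\epsilon'-\epsilon}\;\le\;\frac{\delta(1-\epsilon)}{\epsilon'-\epsilon}.
\]
Combined with the pointwise bound $\err_\fs(\xs)\le g(\xs)$, this gives $\Pr[\err_\fs(\xs)\ge\epsilon']\le \delta(1-\epsilon)/(\epsilon'-\epsilon)$, which is exactly the required $(\epsilon',\delta(1-\epsilon)/(\epsilon'-\epsilon))$-PAIC guarantee.

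The only subtle step is the choice of the Markov threshold: a straightforward Markov on $g(\xs)$ would waste the fact that a large fraction of the mass of $Z$ is already below $\epsilon$, producing the weaker bound $(\epsilon+\delta(1-\epsilon))/\epsilon'$. Shifting by $\epsilon$ before applying Markov, justified by the Jensen inequality step above, is what recovers the stated rate. Everything else (the Wasserstein coupling inequality and the truncation $Z\le 1$) is routine.
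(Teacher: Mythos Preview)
Your proof is correct and follows essentially the same route as the paper's: both establish the pointwise bound $\err_\fs(x)\le \E[|\fdr[x,\rs](\ys)-\rs|]$ via the Wasserstein coupling inequality, and then convert the mPAIC tail bound on $Z$ into a tail bound on its conditional expectation $g(\xs)=\E[Z\mid\xs]$ using $Z\in[0,1]$. The paper phrases the second step as a contrapositive argument with a pointwise decomposition on the ``bad set'' $\{x:g(x)\ge\epsilon'\}$, whereas you apply Jensen to $(\,\cdot-\epsilon)^+$ and then Markov directly; these are equivalent rearrangements of the same inequality and yield identical constants.
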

Proof can be found in Appendix~\ref{sec:appendix_mono_paic}.
This theorem shows that mPAIC implies PAIC (up to different constants $\epsilon,\delta$). In particular, if a forecaster achieves mPAIC perfectly (i.e. $\epsilon=0, \delta=0$), it is also perfectly PAIC.

The benefit of Definition~\ref{def:paic_monotonic} is that 
it can be verified with a finite number of validation samples, and is amenable to uniform convergence bounds, so that we can train it following the standard empirical risk minimization framework.  

\begin{restatable}{prop}{concentration}[Concentration]
\label{prop:concentration}
Let $\fdr$ be any $(\epsilon,\delta)$-mPAIC forecaster, and $(x_1, y_1), \cdots, (x_n, y_n) \simiid \cdfjoint$, $r_1, \cdots, r_n \simiid \ucdf$, then with probability $1-\gamma$
\[ \frac{1}{n} \sum_{i=1}^n \mathbb{I}(\lvert \fdr[x_i, r_i](y_i) - r_i \rvert \geq \epsilon) \leq \delta + \sqrt{\frac{-\log \gamma}{2n}}  \]
\end{restatable}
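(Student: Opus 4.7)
The plan is a direct application of Hoeffding's inequality to the i.i.d.\ Bernoulli random variables given by the indicators in the sum. This is the standard empirical-risk uniform-convergence template, and since we are evaluating a single fixed forecaster $\fdr$ (not taking a supremum over a hypothesis class), no covering/VC machinery is needed.

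First I would define, for $i=1,\dots,n$, the random variables
\[ Z_i \eqdef \mathbb{I}\bigl( \lvert \fdr[x_i, r_i](y_i) - r_i \rvert \geq \epsilon \bigr). \]
Because $(x_i, y_i, r_i)$ are drawn i.i.d.\ from $\cdfjoint \otimes \ucdf$, the $Z_i$ are i.i.d.\ Bernoulli random variables taking values in $\{0,1\}$. By the mPAIC hypothesis (Definition~\ref{def:paic_monotonic}) applied to a single draw, we have $\mathbb{E}[Z_i] = \Pr[\lvert \fdr[\xs,\rs](\ys) - \rs\rvert \geq \epsilon] \leq \delta$.

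Next I would apply the one-sided Hoeffding inequality to the bounded i.i.d.\ sum $\frac{1}{n}\sum_{i=1}^n Z_i$: for any $t>0$,
\[ \Pr\!\left[ \frac{1}{n}\sum_{i=1}^n Z_i - \mathbb{E}[Z_1] \geq t \right] \leq \exp(-2nt^2). \]
Setting the right-hand side equal to $\gamma$ gives $t = \sqrt{-\log\gamma / (2n)}$, so with probability at least $1-\gamma$,
\[ \frac{1}{n}\sum_{i=1}^n Z_i \leq \mathbb{E}[Z_1] + \sqrt{\frac{-\log\gamma}{2n}} \leq \delta + \sqrt{\frac{-\log\gamma}{2n}}, \]
where the last step uses the mPAIC bound $\mathbb{E}[Z_1]\leq \delta$. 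Substituting back the definition of $Z_i$ recovers the claim.

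There is no real obstacle: the only mild point to verify is that the joint sampling of $(x_i, y_i)\sim\cdfjoint$ together with an independent $r_i\sim \ucdf$ makes the indicator events i.i.d.\ with mean at most $\delta$, which is immediate from the definition of mPAIC (since that probability is computed over exactly the same joint distribution). Everything else is a one-line invocation of Hoeffding.
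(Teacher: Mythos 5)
Your proof is correct and follows essentially the same route as the paper's: both define the indicator Bernoulli variables, invoke the mPAIC definition to bound their mean, and apply one-sided Hoeffding with the tail parameter set to $\gamma$. Your write-up is slightly more careful in making explicit that $\mathbb{E}[Z_1]\le\delta$ (the paper writes the mean as equal to $\delta$), but the argument is identical.
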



%


\paragraph{Applicability of Sufficient Condition}

Our results are most useful when $\ys \mid x$ is almost deterministic, i.e. the uncertainty comes from model ignorance instead of the environment. When $\ys \mid x$ takes a distribution with high variance, Definition~\ref{def:paic_monotonic} can still be achieved, but with a significant sacrifice to sharpness. This is because $\fdr[x, r]$ must be a high variance distribution to satisfy $\fdr[x, r](\ys) \approx r$ for all likely values for $\ys \mid x$. 

\subsection{Learning Calibrated Randomized Forecasters}

Given training data $(x_i, y_i) \simiid \cdfjoint$ we would like to learn a forecaster that satisfy mPAIC. We propose a concrete set of forecasters and a learning algorithm we will use in all of our experiments. 

We will model uncertainty with Gaussian distributions $\lbrace \Nc(\mu, \sigma^2), \mu \in \mathbb{R}, \sigma \in \mathbb{R}^+ \rbrace$. 
We parameterize $\fdr = \fdr_\theta$ as a deep neural network with parameters $\theta$. The neural networks takes in  concatenation of $x$ and $r$ and outputs the $\mu, \sigma$ that decides the returned CDF.


Inspired by Proposition~\ref{prop:concentration}, we optimize the mPAIC objective on the training data defined as~\footnote{In practice we always sample a new $r_i$ for each training step. }
\[ \Lc_{\text{PAIC}}(\theta) = \frac{1}{n}\sum_{i=1}^n \lvert \fdr_\theta[x_i, r_i](y_i) - r_i \rvert  \]
Practically, this objective enforces the calibration but does not take into account the sharpness of the forecaster. For example, a simple $\fdr$ can be constructed that trivially outputs $r$ (Appendix~\ref{sec:appendix_trivial_mpaic}).
We would  also like to minimize the variance $\sigma$ of the predicted Gaussian distribution. Therefore we also  regularize with the log likelihood (i.e. log derivative of the CDF) objective which encourages the sharpness of the prediction 
\[ \Lc_{\text{NLL}}(\theta) = - \frac{1}{n} \sum_{i=1}^n \log \frac{d}{dy} \fdr[x_i, r_i](y_i)\]
Because we model uncertainty with Gaussian distributions, the $\Lc_{\text{NLL}}$ objective is equivalent to the standard squared error (MSE) objective typically used in regression~\citep{myers1990classical} literature.
We use a hyper-parameter $\alpha$ to trade-off between the two objectives:
\begin{align*}
    \mathcal{L}_\alpha(\theta) = (1- \alpha) \Lc_{\text{PAIC}}(\theta) + \alpha \Lc_{\text{NLL}}(\theta) \numberthis\label{eq:training_objective}
\end{align*}  
In other words, when $\alpha \approx 0$ the objective is almost exclusively PAIC, while when $\alpha = 1$ we reduce to the standard log likelihood maximization (i.e. MSE) objective.  

\section{Application I: Fairness}

\begin{figure*}
\begin{center}
\begin{tabular}{cc}
\includegraphics[width=0.94\linewidth]{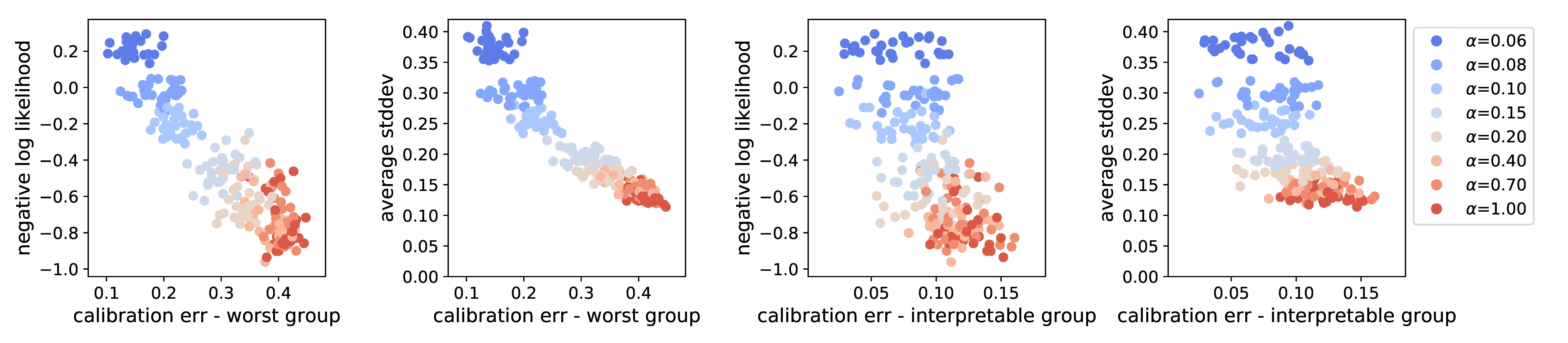} 
\end{tabular}
\vspace{-3mm}
\caption{Sharpness (negative log likelihood and average standard deviation) vs.
calibration error (on worst group and worst interpretable group) 
on the UCI crime dataset for different values of $\alpha$ (recall that $\alpha$ is the coefficient to trade-off $\Lc_{\text{NLL}}$ and $\Lc_{\text{PAIC}}$, $\alpha=1$ corresponds to standard training with $\Lc_{\text{NLL}}$). 
Each dot represent the performance of a classifier trained with some value of $\alpha$. It can be seen that there is a trade-off: a smaller value of $\alpha$ (more weight on $\mathcal{L}_{\mathrm{PAIC}}$ and less weight on $\mathcal{L}_{\mathrm{NLL}}$) leads to better calibration and worse sharpness. \textbf{Left 2}: Calibration loss on the worst group that contain 20\% of the test data. \textbf{Right 2}: Calibration loss on the worst interpretable group. 
}
\label{fig:ece_crime}
\end{center}
\vspace{-3mm}
\end{figure*}

\begin{figure}
    \centering
    \includegraphics[width=1.0\linewidth]{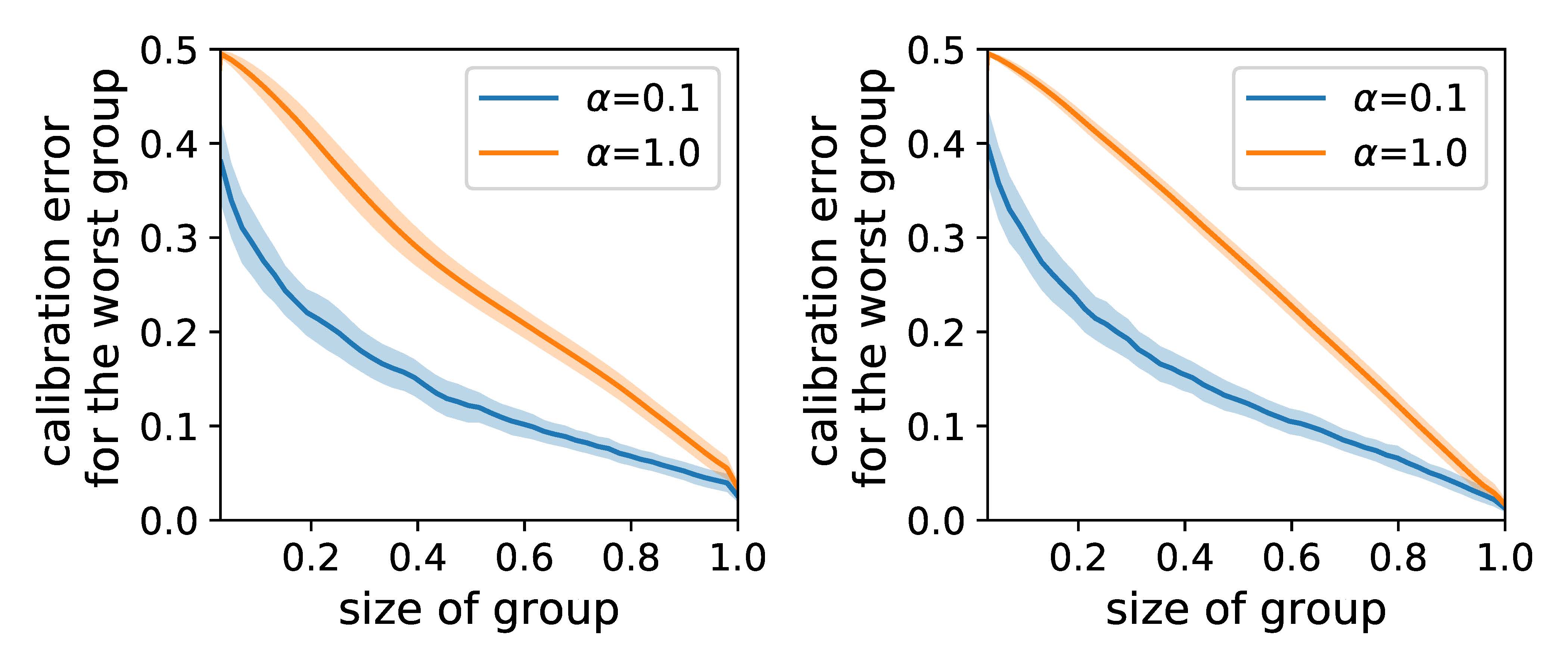}
    \caption{Calibration error as a function of the worst group size. The shaded region represents the one standard deviation error bar on random training/validation partitions. A forecaster trained with the PAIC objective ($\alpha=0.1$) has better calibration on adversarially chosen groups. \textbf{Left}: Without recalibration. \textbf{Right} With post training recalibration. Post training recalibration improves average calibration (i.e. group size = $1.0$) but does not improve adversarial group calibration (group size < $1.0$). 
    }
    \vspace{-5mm}
    \label{fig:ece_size}
\end{figure}

Individual calibration provides guarantees on the performance of a machine learning model on individual samples. As we will show, this has numerous applications. We begin discussing its use in settings where fairness is important. 

\subsection{From Individual Calibration to Group Calibration}

In high-stakes applications of machine learning (e.g., healthcare and criminal justice), it is imperative that
predictions are fair. 
Many definitions of fairness are possible (see related work section), and calibration is a commonly used one.
For example, in a healthcare application we would like to prevent a systematic overestimation or underestimation of a predicted risk for different socio-economic groups~\citep{pfohl2019creating}.
One natural requirement is that predictions for every group are calibrated, that is, the true value below the $r$\% quantile exactly $r$\% of the times. 

If the protected groups are known at training time, we could enforce group calibration as in Definition~\ref{def:group_calibration}.
However, it can be difficult to specify which groups to protect a priori. Some groups are also defined by features that are unobserved e.g. due to personal privacy. 

We propose to address the problem by requiring a stronger notion of calibration, adversarial group calibration, where \emph{any group of sufficiently large size needs to be protected}. Moreover, we can achieve this stronger notion of calibration because it is implied by individual calibration. 



\begin{restatable}{theorem}{advgroup}
\label{thm:pai_vs_calibration}
If a forecaster is $(\epsilon,\delta)$-PAIC with respect to distance metric $\Wc_p$, then $\forall \delta' \in [0, 1]$, $\delta' > \delta$, it is $(\epsilon + \delta/\delta', \delta')$-adversarial group calibrated with respect to $\Wc_p$.  
\end{restatable}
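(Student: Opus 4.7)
The plan is to reduce adversarial group calibration to individual calibration by viewing the group-level CDF as a mixture of the per-point CDFs, and then controlling the mixture using convexity of the Wasserstein distance together with the PAIC bound on a ``good'' set of inputs.

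First, I would fix an arbitrary measurable $\mathcal{S} \subset \mathcal{X}$ with $\Pr[\mathbf{X} \in \mathcal{S}] \geq \delta'$ and write the CDF we care about as a mixture. For each $x$, let $\mu_x := \mathbb{F}_{\mathbf{H}[x](\mathbf{Y})}$, which is the distribution of $\mathbf{H}[x](\mathbf{Y})$ over the joint randomness in $\mathbf{H}$ and in $\mathbf{Y} \mid x$. Because $\mathbf{X}_{\mathcal{S}}$ is independent of the seed of $\mathbf{H}$ and of $\mathbf{Y} \mid x$, we have
\[
\mathbb{F}_{\mathbf{H}[\mathbf{X}_{\mathcal{S}}](\mathbf{Y})} \;=\; \int_{\mathcal{S}} \mu_x \, d\mathbb{F}_{\mathbf{X}_{\mathcal{S}}}(x).
\]

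Next I would invoke convexity of the Wasserstein-$p$ distance in its first argument (for fixed second argument $\mathbb{F}_{\mathbf{U}}$). This is a standard fact: $\mathcal{W}_p(\cdot, \nu)^p$ is convex under mixtures, and since $t \mapsto t^{1/p}$ is concave, Jensen also yields
\[
\mathcal{W}_p\!\left(\int \mu_x \, d\pi(x),\, \nu\right) \;\leq\; \int \mathcal{W}_p(\mu_x, \nu)\, d\pi(x).
\]
Applying this with $\pi = \mathbb{F}_{\mathbf{X}_{\mathcal{S}}}$ and $\nu = \mathbb{F}_{\mathbf{U}}$ gives
\[
\mathcal{W}_p\!\left(\mathbb{F}_{\mathbf{H}[\mathbf{X}_{\mathcal{S}}](\mathbf{Y})},\, \mathbb{F}_{\mathbf{U}}\right) \;\leq\; \mathbb{E}\!\left[\mathrm{err}_{\mathbf{H}}(\mathbf{X}_{\mathcal{S}})\right].
\]

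Finally I would bound the right-hand side using PAIC. Let $A = \{\mathrm{err}_{\mathbf{H}}(\mathbf{X}) > \epsilon\}$, so $\Pr[A] \leq \delta$. Since $\mathrm{err}_{\mathbf{H}}(\cdot) \in [0,1]$ (the Wasserstein-$p$ distance between any two distributions on $[0,1]$ is at most $1$), splitting the expectation on $A$ versus $A^c$ and conditioning on $\mathbf{X} \in \mathcal{S}$ yields
\[
\mathbb{E}\!\left[\mathrm{err}_{\mathbf{H}}(\mathbf{X}_{\mathcal{S}})\right] \;\leq\; \epsilon + \Pr[A \mid \mathbf{X} \in \mathcal{S}] \;\leq\; \epsilon + \frac{\Pr[A]}{\Pr[\mathbf{X} \in \mathcal{S}]} \;\leq\; \epsilon + \frac{\delta}{\delta'}.
\]
Combining the two displays and taking the supremum over admissible $\mathcal{S}$ completes the proof.

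The main obstacle I anticipate is the convexity step: one needs to confirm that $\mathcal{W}_p$ has the right Jensen-type inequality, and to be careful that the ``mixture'' interpretation of $\mathbb{F}_{\mathbf{H}[\mathbf{X}_{\mathcal{S}}](\mathbf{Y})}$ is correct when both $\mathbf{H}$ and $\mathbf{Y}\mid x$ are random. Once the mixture representation is set up cleanly, the remaining Markov-style inequality on $\Pr[A\mid \mathbf{X}\in\mathcal{S}]$ is routine.
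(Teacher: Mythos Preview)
Your overall structure---mixture representation, a Jensen-type inequality to pass to $\mathbb{E}[\mathrm{err}_{\mathbf{H}}(\mathbf{X}_{\mathcal{S}})]$, then a good/bad split with $\Pr[A \mid \mathbf{X}\in\mathcal{S}] \leq \delta/\delta'$---is exactly the paper's argument. The gap is precisely where you suspected, the convexity step, and your proposed justification does not work. From convexity of $\mathcal{W}_p(\cdot,\nu)^p$ you only obtain $\mathcal{W}_p\big(\int \mu_x\, d\pi,\ \nu\big) \leq \big(\int \mathcal{W}_p(\mu_x,\nu)^p\, d\pi\big)^{1/p}$, and concavity of $t\mapsto t^{1/p}$ then points the \emph{wrong} way: Jensen gives $\big(\int \mathcal{W}_p(\mu_x,\nu)^p\, d\pi\big)^{1/p} \geq \int \mathcal{W}_p(\mu_x,\nu)\, d\pi$, not $\leq$. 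For the usual Wasserstein-$p$ with $p>1$ the inequality you claim is in fact false: take $\nu$ and $\mu_1$ to be a point mass at $0$, $\mu_2$ a point mass at $2$; then $\mathcal{W}_2(\tfrac12\mu_1+\tfrac12\mu_2,\nu)=\sqrt{2}>1=\tfrac12\mathcal{W}_2(\mu_1,\nu)+\tfrac12\mathcal{W}_2(\mu_2,\nu)$.

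The paper sidesteps this by working directly with CDFs. It uses the identity $\mathbb{F}_{\mathbf{H}[\mathbf{X}_{\mathcal{S}}](\mathbf{Y})}(r)=\mathbb{E}_{x\sim \mathbf{X}_{\mathcal{S}}}\big[\mathbb{F}_{\mathbf{H}[x](\mathbf{Y})}(r)\big]$ together with the representation $d_{\mathcal{W}_p}(F,\mathbb{F}_{\mathbf{U}})=\big(\int_0^1|F(r)-r|^p\,dr\big)^{1/p}$, and then applies Minkowski's integral inequality for the $L^p$ norm (this is what the paper labels ``Jensen'') to obtain $d_{\mathcal{W}_p}\big(\mathbb{F}_{\mathbf{H}[\mathbf{X}_{\mathcal{S}}](\mathbf{Y})},\mathbb{F}_{\mathbf{U}}\big)\leq \mathbb{E}_{x\sim \mathbf{X}_{\mathcal{S}}}[\mathrm{err}_{\mathbf{H}}(x)]$. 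After that step, your good/bad decomposition and the bound $\epsilon+\delta/\delta'$ go through verbatim (the paper even records a slightly sharper intermediate constant $(p+1)^{-1/p}$ on the bad set before relaxing it to $1$).
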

We prove a stronger version of this theorem in Appendix~\ref{sec:appendix_fairness}.
We know from theory that a forecaster trained on the individual calibration objective Eq.(\ref{eq:training_objective}) can achieve good individual calibration (and thus group calibration) on the training data. We will now experimentally verify that the benefit generalizes to test data.

 \subsection{Experiments}

\textbf{Experiment Details}. We use the UCI crime and communities dataset~\citep{dua2019uci} and we predict the crime rate based on features of the neighborhood (such as racial composition). For training details and network architecture see Appendix~\ref{sec:appendix_fairness}. \footnote{ https://github.com/ShengjiaZhao/Individual-Calibration}

\textbf{Recalibration.} Post training recalibration is a common technique to improve calibration. 
For completeness we additionally report all results when combined with recalibration by isotonic regression as in \citep{kuleshov2018accurate}. 
Post training recalibration improves average calibration, but as we show in the experiments, has limited effect on individual or adversarial group calibration. 

\subsubsection{Performance Metrics}

\textbf{1) Sharpness metrics}: Sharpness measures whether the prediction $\fs[x]$ is concentrated around the ground truth label $y$. We will use two metrics: negative log likelihood on the test data $\Eb\left[ -\log \fdr(\xs, \rs)(\ys) \right]$, and expected standard deviation of the predicted CDFs
$ \Eb \left[ \sqrt{\Var[ \fdr(\xs, \rs)] } \right]$. Because forecaster outputs Gaussian distributions to represent uncertainty, this is simply the average standard deviation $\sigma$ predicted by the forecaster.  

\textbf{2) Calibration metrics}: we will measure the ($\epsilon,\delta$)-adversarial group calibration (with 1-Wasserstein distance) defined in Definition~\ref{def:adv_group_calibration} and Eq.(\ref{eq:adv_group_calibration}). 
In particular, we measure $\epsilon$ as a function of $\delta$ --- for a smaller group (smaller $\delta$) the $\epsilon$ should be larger (worse calibration), and vice versa. A better forecaster should have a smaller $\epsilon$ for any given value of $\delta$. We show in Appendix~\ref{sec:appendix_ece} that measuring $\epsilon$ is identical to the commonly used ECE~\citep{guo2017calibration} metric for miscalibration.

The worst adversarial group may be an uninterpretable set, which is argurably less important than interpretable groups. Therefore, we also measure group calibration with respect to a set of known and interpretable groups. In particular, for each input feature we compute its the median value in the test data, and consider the groups that are above/below the median. For example, if the input feature is income, we considers the group with above median income, and the group with below median income. 
We also consider the intersection of any two groups. 

\subsubsection{Results}

The results are shown in Figure~\ref{fig:ece_crime} and \ref{fig:ece_size}. We compare different values of $\alpha$ (with $\alpha \approx 0$ we learn with $\Lc_{\text{PAIC}}$, and with $\alpha \approx 1$ we learn with $\Lc_{\text{NLL}}$). 

The main observation is forecasters learned with smaller $\alpha$ almost always achieve better group calibration (both adversarial group and interpretable group) and worse sharpness (log likelihood and variance of predicted distribution). 
This shows a trade-off between calibration and sharpness. 
Depending on the application, a practitioner can adjust $\alpha$ and find the appropriate trade-off, e.g. given some constraint on fairness (maximum calibration error) achieve the best sharpness (log likelihood).

We also observe that post training recalibration improves average calibration (i.e. when the size of adversarial group is 100\% in Figure~\ref{fig:ece_size}). However, we cannot expect recalibration to improve individual or adversarial group calibration --- we empirically confirm this in Figure~\ref{fig:ece_size}. 

In Table~\ref{table:groups} in Appendix~\ref{sec:appendix_fairness} we also report the worst interpretable groups that are miscalibrated. These do correspond to groups that we might want to protect, such as percent of immigrants, or racial composition.

\section{Application II: Decision under Uncertainty}

Machine learning predictions are often used to make decisions. 
With good uncertainty quantification, the agent can consider different plausible outcomes, and pick the best action in expectation.

More formally, suppose there is a set of actions $a \in \Ac$, 
and some loss function $l: \Xc \times \Yc \times \Ac \to \mathbb{R}$. If we had a perfect forecaster (i.e. $\fs[x] = \cdfyx$), then given input $x$, Bayesian decision theory would suggest to take the action that minimizes expected loss~\citep{fishburn1979two} under the predicted probability.
\begin{align*} 
    l_\fs(x) &\eqdef \min_a \Eb_{\tilde{\ys} \sim \fs[x]}[l(x, \tilde{\ys}, a)] \numberthis\label{eq:bayesian_loss} \\
    \phi_\fs(x) &\eqdef \arg\min_a \Eb_{\tilde{\ys} \sim \fs[x]}[l(x, \tilde{\ys}, a)] \numberthis\label{eq:bayesian_decision}
\end{align*} 
However, perfect forecaster is almost never possible in practice. 
Nevertheless, calibration provides some guarantee on the decision rule in Eq.(\ref{eq:bayesian_decision}) for certain loss functions. 
In particular, we consider loss functions $l(x, \cdot, a)$ that, for each $x, a$, are either monotonically non-increasing or non-decreasing in y. We call these loss functions \textbf{monotonic}. 
For example, 
if $y$ represents stock prices and $a \in \lbrace \text{buy}, \text{sell} \rbrace$, then when $a=\text{buy}$, loss is decreasing in $y$; when $a=\text{sell}$, loss is increasing in $y$. 

In the following theorem (proof in Appendix~\ref{sec:appendix_decision_making}) we show that the actual loss cannot exceed the expected loss in Eq.(\ref{eq:bayesian_loss}) too often. This would be Markov's inequality if Eq.(\ref{eq:bayesian_loss}) takes expectation under the true distribution $\cdfyx$. Interestingly the inequality is still true when the expectation is under the predicted distribution $\fs[x] \neq \cdfyx$. 

\begin{restatable}{theorem}{decision}
\label{thm:markov}
Suppose $l: \Xc\times \Yc \times \Ac \to \mathbb{R}$ is a monotonic non-negative loss, let $\phi_\fs$ and $l_\fs$ be defined as in Eq.(\ref{eq:bayesian_decision})

1. If $\fs$ is $0$-average calibrated, then $\forall k > 0$
\begin{align*}
    \Pr[l(\xs, \ys, \phi_\fs(\xs)) \geq k l_\fs(\xs)] \leq 2/k
\end{align*}
2. If $\fs$ is $(0, 0)$-PAIC, then $\forall x \in \Xc, k > 0$
\begin{align*}
    \Pr[l(x, \ys, \phi_\fs(x)) \geq k l_\fs(x)] \leq 1/k
\end{align*}
\end{restatable}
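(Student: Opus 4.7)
The plan is to reduce the event $\{l(x, \ys, \phi_\fs(x)) \geq k\, l_\fs(x)\}$ to a one-sided tail event on the quantile $V \eqdef \fs[x](\ys)$ of the true label under the predicted CDF, and then bound that tail using the calibration hypothesis. Two ingredients drive the reduction: monotonicity of $l(x, \cdot, a)$ in $y$, and the fact that $\fs[x]$ is itself a monotone (CDF) function of $y$.

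\emph{Step 1 (Markov under the predictive distribution).} Condition on $\fs$ and set $a^\star = \phi_\fs(x)$. Since $l(x, \cdot, a^\star)$ is monotone in $y$, the ``bad'' set $B(\fs) \eqdef \{y \in \Yc : l(x, y, a^\star) \geq k\, l_\fs(x)\}$ is a one-sided interval in $y$, namely $\{y \geq \beta(\fs)\}$ (non-decreasing case) or $\{y \leq \beta(\fs)\}$ (non-increasing case). Because $l_\fs(x) = \mathbb{E}_{\tilde{\ys} \sim \fs[x]}[l(x, \tilde{\ys}, a^\star)]$ and $l \geq 0$, Markov's inequality under the predictive distribution gives
\[
\Pr_{\tilde{\ys} \sim \fs[x]}[\tilde{\ys} \in B(\fs) \mid \fs] \leq 1/k,
\]
which translates to the CDF bound $\fs[x](\beta(\fs)) \geq 1 - 1/k$ in the non-decreasing case and $\fs[x](\beta(\fs)) \leq 1/k$ in the non-increasing case.

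\emph{Step 2 (lifting to $V$ and applying calibration).} Monotonicity of the CDF $\fs[x]$ yields the inclusions $\{\ys \geq \beta(\fs)\} \subseteq \{V \geq 1 - 1/k\}$ and $\{\ys \leq \beta(\fs)\} \subseteq \{V \leq 1/k\}$. Therefore the event of interest is contained in one of the two one-sided tail events on $V$, the side being chosen by the direction of monotonicity at $a^\star$. For Part~2 the hypothesis $(0,0)$-PAIC gives $V = \fs[x](\ys)$ uniform on $[0,1]$ for the fixed $x$, so each tail has mass $1/k$ and only one is active for a given realization of $\fs$, yielding the $1/k$ bound. For Part~1 only average calibration is available, so $V = \fs[\xs](\ys)$ is uniform only marginally over $\xs$; since the direction of monotonicity depends on the random $\fs$, we union-bound the two possible tails $\{V \geq 1 - 1/k\}$ and $\{V \leq 1/k\}$, each of probability $1/k$, giving $\leq 2/k$.

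The main obstacle is that the direction of monotonicity is tied to the random action $a^\star = \phi_\fs(x)$, so the ``right'' tail side changes with $\fs$. Under average calibration one cannot avoid a union bound over the two directions, which is exactly where the factor of $2$ in Part~1 comes from; under PAIC the per-$x$ uniformity of $V$ allows the bound to be applied pointwise in $\fs$ so the factor of $2$ collapses, yielding the sharper $1/k$ in Part~2.
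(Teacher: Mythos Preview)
Your proposal follows essentially the same route as the paper: reduce the event to a one-sided tail of $V=\fs[x](\ys)$ via monotonicity (your ``Markov under the predictive'' is exactly the paper's integral bound $l_h(x)\ge r\,l(x,y,a)$ whenever $h[x](y)<1-r$, and the symmetric bound for the non-increasing case), then control the tails using calibration---union over both directions for Part~1, and conditioning on the monotonicity direction for Part~2. The only difference is cosmetic framing; the logical skeleton, including the treatment of the $\fs$-dependent direction in Part~2, matches the paper's proof.
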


\subsection{Case Study: Credit Prediction}

\begin{figure}
    \centering
    \includegraphics[width=1.0\linewidth]{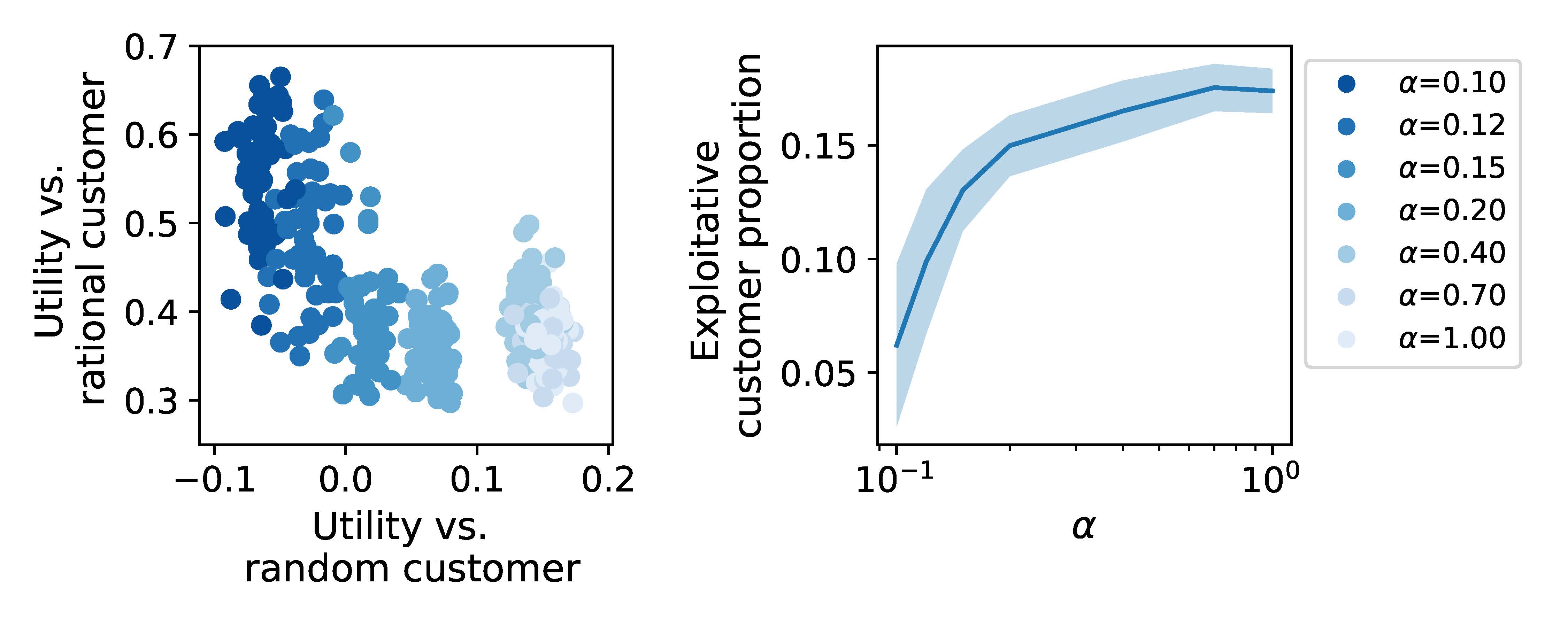}
    \caption{Comparison between individual calibration ($\alpha \approx 0$) and baseline ($\alpha = 1$). \textbf{Left}: Each dot represents a random roll-out for different values of $\alpha$, where we plot the bank's average utility when the customers either decide to apply randomly or rationally. Individually calibrated forecaster perform worse than baseline when the customers are random, but better when customers are rational. \textbf{Middle Left}: Proportion of exploitative customers (customers with $y < y_0$ but decide to apply). Random forecasters have less systematic bias and discourages exploitative customers.  
    }
    \label{fig:credit_simulation}
\end{figure}

Suppose customers of a financial institution are represented with a feature vector $x$ and a real-valued credit worthiness score $y \in \Yc$. The bank has a financial product (e.g. credit card or loan) with a minimum threshold $y_0 \in \mathbb{R}$ for credit worthiness. 
If a customer chooses to apply for the product, the bank observes $x$, and uses forecaster $\fs$ to predict 
 their true credit worthiness $y$. 
There is a positive utility for saying 'yes' to a qualified customer, and a negative utility for saying 'yes' to a disqualified customer. More specifically, the utility (negative loss) for the bank is
\begin{center}
\begin{tabular}{c|cc}
   & $y \geq y_0$ & $y < y_0$ \\ \hline
    'yes' & 1 & -3 \\
    'no' & 0 & 0
\end{tabular}
\end{center}

\paragraph{Guarantees from Average Calibration:} Suppose the bank uses the Bayesian decision rule in Eq.(\ref{eq:bayesian_decision}). If $\fs$ is average calibrated, 
Theorem~\ref{thm:markov} would imply that when the bank says 'yes', at most $25\%$ will be to unqualified customers (when customers truly come from the distribution $\cdf_\xs$). For details see Appendix~\ref{sec:appendix_credit_guarantee}.


The Bayesian decision rule in Eq.(\ref{eq:bayesian_decision}) is fragile when $\fs$ is only average calibrated 
because the guarantee above is void if the customers do not come from the distribution $\cdf_\xs$. 
For example, suppose some unqualified customers know that their credit scores are overestimated by the bank,
then these customers are more likely to apply. 
More concretely, if only the customers who would be mistakenly approved ended up applying, then the bank is guaranteed to lose (it will suffer a loss of -3 for each customer). 

\noindent{\bf Guarantees from Individual Calibration:} If $\fs$ is individually calibrated (and hence also calibrated with respect to any adversarial subgroup), the bank cannot be exploited --- no matter which subgroup of customers choose to apply, the fraction of unqualified approvals is at most $25\%$.   


\subsubsection{Simulation}
We perform simulations 
to verify that individual calibrated forecasters are less exploitable and can achieve higher utility in practice.
We model the customers as rational agents that predict their utility and exploit any mistake by the bank. For detailed setup, see Appendix~\ref{sec:appendix_credit_details}. 

The results are shown in Figure~\ref{fig:credit_simulation}. We compare different values of $\alpha \in [0.1, 1.0]$ (recall when $\alpha \approx 0$ we almost exclusive optimize $\Lc_{\text{PAIC}}$, and when $\alpha = 1$ we exclusively optimize $\Lc_{\text{NLL}}$). We compare the average utility on random customers (customers drawn from $\cdfjoint$) and rational customers (Appendix~\ref{sec:appendix_credit_details}). 

The main observations is that when customers are random, individual calibrated forecasters perform worse because average calibration is already sufficient. 
On the other hand, when the customers are rational, and try to exploit any systematic bias with the decision maker, individually calibrated forecasters perform much better. 



\section{Related Work}

\textbf{Randomized Forecast}:
Randomized forecast has been used in adversarial environments. In online learning where the true label can be adversarial, randomized forecasters can achieve low regret~\citep{cesa2006prediction,shalev2012online}. In security games / Stackelberg games~\citep{tsai2010urban, trejo2015stackelberg}, 
defender needs randomization to play against attackers. 
\citep{perdomo2020performative} gives a theoretical characterization of prediction in (possibly adversarial) environments when the loss function is strongly convex.  


\textbf{Calibration}:
Definitions of average calibration first appeared in the statistics literature \citep{brier1950verification, murphy1973new, dawid1984present,cesa2006prediction}. Recently there has been a surge of interest in recalibrating classifiers~\citep{platt1999probabilistic, zadrozny2001obtaining, zadrozny2002transforming, niculescu2005predicting}, especially deep networks~\citep{guo2017calibration, lakshminarayanan2017simple}. Average calibration in the regression setup has been studied by \citep{gneiting2007probabilistic,kuleshov2018accurate}.

Group calibration for a small number of pre-specified groups has been studied in~\citep{kleinberg2016inherent}.  Interestingly \citep{hebert2017calibration,kearns2017preventing} can achieve calibration for any group computable by a small circuit, but is computationally difficult (likely no polynomial time algorithm). Similarly \citep{barber2019limits} achieve calibration for a set of groups, but only has efficient algorithms for special sets of groups.
\citep{kearns2019average} proposes a notion of individual calibration applicable when there are many prediction tasks, and each individual draws multiple prediction tasks. 
\citep{joseph2016fairness} achieves a notion similar to individual calibration for fairness, but needs strong realizability assumptions that are difficult to verify.  
\citep{liu2018implicit} proves an upper bound on calibration error for any group. However, it is unclear how to compute the upper bound if the group labels are not provided.


\textbf{Calibration for Regression} We extend the definition of regression calibration proposed in \citep{kuleshov2018accurate}. This is not the unique reasonable definition for regression calibration. For example, we can partition $\Yc$ into bins to convert the regression problem into a classification problem, apply the classification calibration definition~\citep{vovk2005algorithmic,guo2017calibration}, and take the number of bins to infinity. This leads to a different definition for calibration than ours (even in the limit of infinitely many bins). Alternative definitions have also been proposed in~\citep{levi2019evaluating}. Individual calibration that extends these alternative definitions is beyond the scope of this paper.

\textbf{Conformal Prediction} Individual guarantees are also discussed in the conformal prediction literature \citep{vovk2005algorithmic}, where the objective is to produce an set that contains the true label with guaranteed high probability. \citep{vovk2012conditional,barber2019limits} show that if a conformal predictor \textit{always} satisfies conditional coverage (i.e. the true label belong to the predicted set with the advertised probability for each individual sample), then the size of the set must be impractically large. Therefore \citep{barber2019limits} instead achieves a notion of coverage for groups. 

An individually calibrated forecaster can be converted into a conformal predictor that satisfies conditional coverage --- $\ys \mid x$ belongs to the set $[\fs[x]^{-1}(\alpha/2), \fs[x]^{-1}(1-\alpha/2)]$ with $1-\alpha$ probability if the forecaster is individually calibrated.
However, our algorithm is not bound by the impossibility result of \citep{barber2019limits,vovk2012conditional} because our algorithm do not \textit{always} produce individually calibrated forecasters --- success of the algorithm depends on the inductive bias and the data distribution. However, whenever the algorithm succeeds in producing individually calibrated forecasters, we do obtain post-training guarantees by Theorem~\ref{prop:concentration}.  

\textbf{Fairness}:
In addition to calibration~\citep{kleinberg2016inherent,hebert2017calibration,kearns2017preventing}, other definitions of fairness include metric based fairness~\citep{dwork2012fairness}, equalized odds~\citep{hardt2016equality}, conterfactual fairness~\citep{kusner2017counterfactual, kilbertus2017avoiding}, and representation fairness~\citep{zemel2013learning,louizos2015variational,song2018learning}. The trade-off between these definitions are discussed in~\citep{pleiss2017fairness,kleinberg2016inherent,friedler2016possibility,corbett2017algorithmic}. 

\section{Conclusion and Future Work}
In this paper we explore using randomization to achieve individual calibration for regression. We show that these individually calibrated predictions are useful for fairness or decision making under uncertainty. 
One future direction is extending our results to classification. 
The challenge is that there is no natural way to define a CDF for a discrete random variables. 
Another open question is a good theoretical characterization of the trade-off between sharpness and individual calibration. 





\section{Acknowledgements}
This research was supported by AFOSR (FA9550-19-1-0024), NSF (\#1651565, \#1522054, \#1733686), JP Morgan, ONR, TRI, FLI, SDSI, and SAIL. 
Toyota Research Institute ("TRI")  provided funds to assist the authors with their research but this article solely reflects the opinions and conclusions of its authors and not TRI or any other Toyota entity.
TM is also supported in part by Lam Research and Google Faculty Award. 

We are thankful for valuable feedback from Kunho Kim (Stanford), Ananya Kumar (Stanford) and Wei Chen (MSRA). 
\bibliographystyle{icml2020}
\bibliography{main}

\newpage
\newcommand{\ece}{{\mathrm{ECE}}}

\onecolumn
\appendix

\section{Experiments Details and Additional Results}

\subsection{Additional Theoretic Results}
\subsubsection{Relationship between PAIC and ECE}
\label{sec:appendix_ece}
Given a forecaster $\fs$ we can define expected calibration error (ECE) as 
\begin{align*}
    \ece(\fs) = \int_{c=0}^1 \left\lvert \Pr[\fs[\xs](\ys) \leq c] - c \right\rvert dc
\end{align*}

\begin{restatable}{prop}{eqece}
\label{prop:eq_ece} 
$\ece(\fs) = d_{W1}(\fcdfs, \ucdf)$.
\end{restatable}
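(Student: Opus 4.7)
The proof is essentially an unfolding of definitions, so the plan is simply to identify each piece of the $d_{W1}$ expression with the corresponding piece of the $\ece$ expression.

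First I would recall the two definitions side by side. By the Wasserstein-1 distance introduced earlier in the paper, for any two CDFs $\cdf,\cdf' \in \Fc([0,1])$,
\[
d_{W1}(\cdf,\cdf') = \int_{r=0}^1 |\cdf(r) - \cdf'(r)|\,dr .
\]
Setting $\cdf = \fcdfs$ and $\cdf' = \ucdf$, I would note that by definition of a CDF, $\fcdfs(r) = \Pr[\fs[\xs](\ys) \leq r]$, and since $\ucdf$ is the CDF of a uniform random variable on $[0,1]$, we have $\ucdf(r) = r$ for $r \in [0,1]$.

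Substituting these two identifications into the Wasserstein-1 expression gives
\[
d_{W1}(\fcdfs,\ucdf) = \int_{r=0}^1 \bigl|\Pr[\fs[\xs](\ys) \leq r] - r\bigr|\,dr ,
\]
which is exactly the definition of $\ece(\fs)$ (after the harmless renaming of the dummy variable $r$ to $c$). The only minor obstacle, if any, is a sanity check that $\fs[\xs](\ys)$ takes values in $[0,1]$ so that $\fcdfs$ is indeed a CDF on $[0,1]$ and both integrands are supported in the same interval; this is immediate because $\fs[x]$ is a CDF into $[0,1]$ for every $x$. So the proposition reduces to a one-line verification, and no real work beyond matching notation is required.
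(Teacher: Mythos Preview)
Your proof is correct. With the paper's definitions of $\ece$ and $d_{W1}$, the identity is literally a renaming of the integration variable once one writes $\fcdfs(r)=\Pr[\fs[\xs](\ys)\le r]$ and $\ucdf(r)=r$.

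Your route is, however, more direct than the paper's. The paper first rewrites $\Pr[\fs[\xs](\ys)\le c]$ as $\fcdfs^{-1}(c)$ and then invokes a separate lemma stating that for a monotone $\phi:[0,1]\to[0,1]$ with $\phi(0)=0$, $\phi(1)=1$ one has $\int_0^1|\phi^{-1}(c)-c|^s\,dc=\int_0^1|\phi(r)-r|^s\,dr$, applying it with $s=1$ and $\phi=\fcdfs$. This detour would be needed if $d_{W1}$ had been defined via quantile functions (inverse CDFs), as is common in the optimal transport literature, but since the paper defines $d_{W1}$ directly as the $L^1$ distance between CDFs, your one-line unfolding suffices. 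The paper's lemma is more general (it handles all $s\ge 1$, relevant for $\Wc_p$), whereas your argument is the minimal one for the $p=1$ statement actually being proved.
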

Intuitively, both $d_{W1}(\fcdfs, \ucdf)$ try to integrate the difference between the curve $c \mapsto \Pr[\fs[\xs](\ys) \leq c]$ and the curve $c \mapsto c$. The difference is that they integrate the difference in different ways (similar to the difference between Riemann and Lebesgue integral). 

\subsubsection{Trivial Construction of mPAIC forecaster}
\label{sec:appendix_trivial_mpaic}

We construct a trivial forecaster that is always mPAIC. Let $\Phi$ be the standard Gaussian CDF, In particular for some $c > 0$, choose 
\begin{align*}
    \fdr[x, r](y) = \Phi(y/c - \Phi^{-1}(r))
\end{align*}
Then when $c \to \infty$, we have $ \fdr[x, r](y) = \Phi(\Phi^{-1}(r)) = r$. In other words, for any $\epsilon, \delta$, $\fdr$ is $(\epsilon,\delta)$-mPAIC for sufficiently large $c$. However, this forecaster is certainly not useful in practice because it outputs a distribution with variance $\to \infty$.

\subsection{Fairness Experiment Details}

We use the UCI crime and communities dataset~\citep{dua2019uci} and we predict the crime rate based on features about the neighborhood (such as racial composition). The prediction model is a fully connected deep network, where the additional input $r$ is concatenated into each hidden layer (except the last one). 
Other than this difference, all other setups are standard --- with dropout and early stopping on validation data to prevent over-fitting. For details please refer to the code included with this paper. 

During evaluation of calibration error for interpretable groups, we only consider groups with at least 150 samples to avoid excessive estimation error. 

\subsection{Additional Plots and Comparisons for Fairness Experiments}

In Figure~\ref{fig:ece_crime2} we plot the same experimental results in Figure~\ref{fig:ece_crime}, where the only difference is we apply post-training recalibration~\citep{kuleshov2018accurate}. There is no qualitative difference between Figure~\ref{fig:ece_crime} and Figure~\ref{Fig:ece_crime2} because (average) calibration does not improve calibration for the worst group. 

\begin{figure*}
\begin{center}
\includegraphics[width=0.94\linewidth]{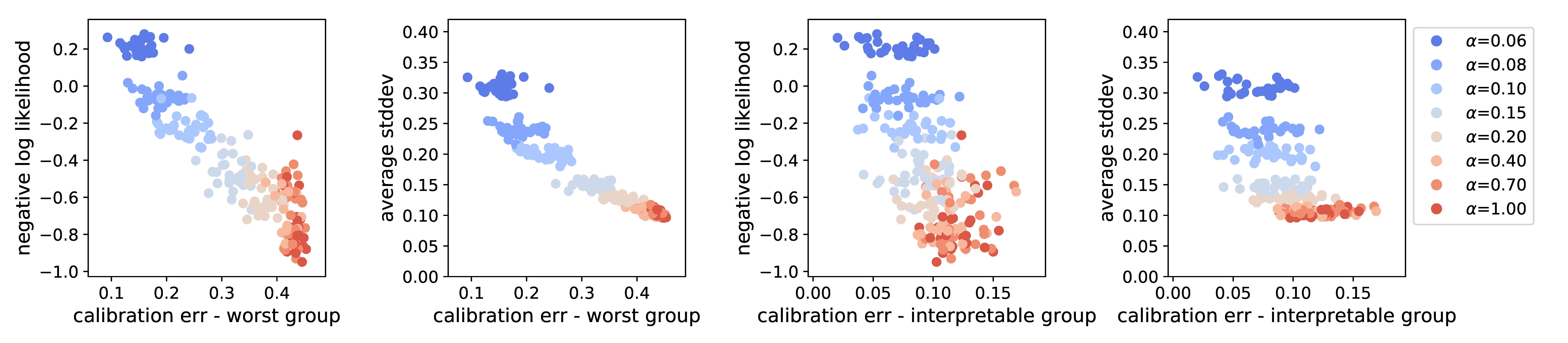}
\vspace{-3mm}
\caption{The same plot as Figure~\ref{fig:ece_crime}, but with recalibration by isotonic regression. The results are qualitatively the same with and without recalibration.
}
\label{fig:ece_crime2}
\end{center}
\vspace{-3mm}
\end{figure*}

\begin{table}
    \centering
    \begin{tabular}{c|c|c}
         & No Recalibration & With Recalibration \\
         \hline
        $\alpha=0.1$ & population density (+) & pct immigrant 8yr (-)  \\ 
        (PAIC) & pct drug officer (-) & vac house boarded (-)\\ \hline
        $\alpha=1.0$ & pct black (+) & pct immigrant (+) \\
        (NLL) & pct < 3 bedroom (+) & pct dense house (+) 
    \end{tabular}
    \caption{Least calibrated group for each setup. A + sign indicates this feature is above the median and a - sign indicates the feature is below the median. These are indeed groups where fairness can be a consideration (e.g. immigrants, race or economic condition).}
    \label{table:groups}
\end{table}

\subsection{Experiment Details for Credit Approval}
\label{sec:appendix_credit_details}
\label{sec:appendix_credit_guarantee}

\paragraph{Dataset} We will use the "Give Me Some Credit" dataset on Kaggle. Because it is a binary classification dataset (credit delinquency vs. no delinquency), we first train a classifier to predict the Bernoulli probability, and use the probability (plus a small Gaussian noise) as the label. We synthesize a training set and a validation set, where the validation set is very large to simulate a stream of non-repeating customers. We train the bank's forecaster $\fs$ on the training set, and apply it to interacting with customers sampled from the validation set.

\paragraph{Customer Model} The customer utility we use is

\begin{center}
\begin{tabular}{c|cc}
   & $y \geq y_0$ & $y < y_0$ \\ \hline
    'yes' & 0.2 & 1.0 \\
    'no' & -0.5 & -0.5
\end{tabular}
\end{center}

We assume the customers knows their own credit worthiness $y$. Based on previous customers $x, y$ and the actual utility from playing the game, we learn a function $\psi(x, y) \to \mathbb{R}$ by gradient descent to predict the customer's utility. The prediction function $\psi$ is also a fully connected deep neural network. Each new customer $(x_{\text{new}}, y_{\text{new}}) \sim \cdfjoint$ will only apply if $\psi(x_{\text{new}}, y_{\text{new}}) \geq 0$.

\paragraph{Decision Rule}
The ``Bayesian'' decision rule in Eq.(\ref{eq:bayesian_decision}) can be written as
\begin{align*}
     \phi_\fs(x) = \left\lbrace \begin{array}{cc} \text{'yes'} & \fs[x](y_0) \leq 1/4 \\ \text{'no'} & \text{otherwise} \end{array} \right. \numberthis\label{eq:bank_strategy}
\end{align*}

\textbf{Recalibration} Since post training recalibration~\citep{kuleshov2018accurate,malik2019calibrated} is usually beneficial, we will report both results with and without recalibration by isotonic regression. The results with recalibration is in Figure~\ref{fig:credit_simulation} and the results without recalibration is in Figure~\ref{fig:credit_simulation2}.





\subsection{Additional Plots for Credit Approval}

In Figure~\ref{fig:credit_simulation2} we plot the results without post training recalibration. They are qualitatively similar to Figure~\ref{fig:credit_simulation}. Post training recalibration has little effect on calibration of the worst sub-group, and therefore do not improve performance in this experiment. 

\begin{figure}
    \centering
    \includegraphics[width=0.5\linewidth]{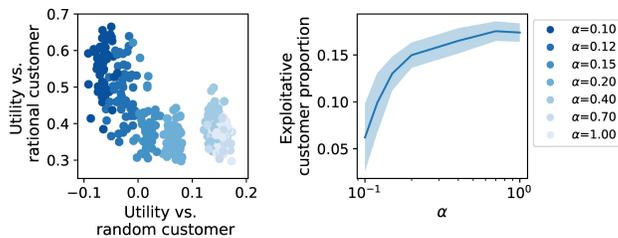}
    \vspace{-5mm}
    \caption{The experiment in Figure~\ref{fig:credit_simulation} without post training recalibration. 
    }
    \label{fig:credit_simulation2}
\end{figure}



\section{Proofs}

\subsection{Proofs for Section 3}
\label{sec:appendix_impossibility}

\impossibledeterministic*

\begin{proof}[Proof of Proposition~\ref{prop:impossible_deterministic}]
Given a distribution $\cdf$ and forecaster $\fd$ such that 
$\Pr_{\Dc \sim \cdf}[T(\Dc, \fd) = \yes] = \kappa > 0$, 
we will construct an alternative distribution $\cdf'$ by choosing some function $g: \Xc \to \Yc$ (defined later), and define a new distribution $\xs', \ys' \sim \cdf'_g$ by: $\xs' \sim \cdf_\xs$ and $\ys' \mid x$ is the delta distribution on $g(x)$. Then by Definition~\ref{def:individual_calibration}, $\forall x \in \Xc$ 
\begin{align*} \dw(\cdf_{\fd[x](\ys')}, \ucdf) \geq 1/4 \numberthis\label{eq:trivial_solution} \end{align*} 
In words, the above expression is because for any distribution $\fd[x]$ outputs, we can never rule out a possible ground truth distribution ($\cdf'_g$) that is deterministic. Under a deterministic distribution $\cdf_{\ys' \mid x}$, it must be that Eq.(\ref{eq:trivial_solution}) holds. (An alternative construction can strengthen the theorem by choosing $\ys' \mid x$ to be a distribution with sufficiently small non-zero variance. It can become clear that Eq.(\ref{eq:trivial_solution}) is not an artifact of our requirement that $\fd$ must output a continuous CDF, but rather the the variance of the ground truth distribution cannot be known). 


What remains to show is that there must exist a $g$ such that 
\[ \Pr_{\Dc \sim \cdf'_g}[T(\Dc, \fd) = \yes] \geq \kappa \]
We will do this with the probabilistic method. For convenience we will represent the value `yes' by 1 and the value `no' by 0. We can use the notation
\[ \Pr_{\Dc \sim \cdf} [T(\Dc, \fd) = \yes] := \Eb_{\Dc \sim \cdf}[T(\Dc, \fd)] \]
Because $\cdf_\xs$ assigns zero measure to individual points, for any finite set of $x_1, \cdots, x_n \simiid \cdf_\xs$, all the $x_i$ are distinct (i.e. $x_i \neq x_j, \forall i \neq j$) almost surely. 
Suppose $\Gb$ is a random function on $\lbrace g: \Xc \to \Yc\rbrace$ defined by $\Gb(x) \sim \cdf_{\ys \mid x}$, then random variables $\Dc = \lbrace (x_1, y_1), \cdots, (x_n, y_n) \rbrace$ defined by the following two sampling procedures are identically distributed (i.e. in the sense that they belong to any measurable subset of $(\Xc \times \Yc)^n$ with the same probability) 
\begin{align*}
    &x_1, \cdots, x_n \simiid \cdf_\xs, \qquad y_i \sim \cdf_{\ys \mid x_i} \\
    &x_1, \cdots, x_n \simiid \cdf_\xs, \qquad g \sim \Gb, \qquad y_i = g(x_i) 
\end{align*}
In words, we could either 1. directly sample a dataset $\Dc$ from $\cdf$, or 2. we could first sample a value $g(x) \sim \cdf_{\ys \mid x}$ for each $x$, then sample $x_1, \cdots, x_n \sim \cdf_\xs$ and directly evaluate $y_1 = g(x_1), \cdots, y_n = g(x_n)$. 

Therefore any bounded random variable must have identical expectation under the probability law defined by the two sampling procedures 
\[ \kappa = \Eb_{\Dc \sim \cdf}[T(\Dc, \fd)] = \Eb_{g \sim \Gb}\Eb_{\Dc \sim \cdf'_g}[T(\Dc, \fd)] \]
But this must imply there exists $g$ such that  
\[ \Eb_{\Dc \sim \cdf'_g}[T(\Dc, \fd)] \geq \kappa \]
because a random variable must be able to take a value that is at least its expectation

For any other divergence we can get similar results by replacing Eq.(\ref{eq:trivial_solution}). For example, for total variation distance we have
\begin{align*} d_{\mathrm{TV}}(\cdf_{\fd[x](\ys')}, \ucdf) = 1 \numberthis\label{eq:trivial_solution} \end{align*} 
\end{proof}

\begin{prop}
\label{prop:impossible_deterministic_advgroup}
For any distribution $\cdf$ on $\Xc \times \Yc$ such that $\cdf_\xs$ assigns zero measure to individual points $\lbrace x \in \Xc \rbrace$ sample $\Dc = \lbrace (x_1, y_1), \cdots, (x_n, y_n) \rbrace \simiid \cdf$. 
 For any deterministic forecaster $h$ and any function
$ T(\Dc, \fd) \to \lbrace \yes, \no \rbrace $ 
such that 
\[ \Pr_{\Dc \sim \cdf}[T(\Dc, \fd) = \yes] = \kappa > 0\,, \]
then there exists a distribution $\cdf'$, $\fd$ is not $\left( \epsilon, \delta \right)$-adversarial group calibrated with respect to $\cdf'$ for any $\epsilon < 1/8$ and $\delta < 1/2$, and 
\[ \Pr_{\Dc \sim \cdf'}[T(\Dc, \fd) = \yes] \geq \kappa \]
\end{prop}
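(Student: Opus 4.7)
The plan is to replay the probabilistic-method construction from the proof of Proposition~\ref{prop:impossible_deterministic} and then carry out a new, pigeonhole-style argument tailored to adversarial group calibration instead of individual calibration.

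First, I would re-use the construction of $\cdf'_g$ verbatim. Let $\Gb$ be the random function defined by $\Gb(x) \sim \cdf_{\ys \mid x}$, and for each realization $g$ define $\cdf'_g$ to have marginal $\cdf_\xs$ on $\Xc$ and point-mass conditional $\cdf'_{g,\ys \mid x} = \delta_{g(x)}$. Because $\cdf_\xs$ assigns zero mass to singletons, with probability one all $x_i$ in $\Dc$ are distinct, so sampling $\Dc$ from $\cdf$ and sampling $\Dc$ by first drawing $g \sim \Gb$ then $(x_i)\simiid \cdf_\xs$ and setting $y_i = g(x_i)$ induce the same joint law on $\Dc$. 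Therefore $\Eb_{g \sim \Gb} \Eb_{\Dc \sim \cdf'_g}[T(\Dc,\fd)] = \Eb_{\Dc \sim \cdf}[T(\Dc,\fd)] = \kappa$, and by the probabilistic method there exists a deterministic $g$ with $\Pr_{\Dc \sim \cdf'_g}[T(\Dc,\fd) = \yes] \geq \kappa$. This handles part (b).

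The new part is to show that for this $\cdf'_g$, $\fd$ fails adversarial group calibration with $\epsilon < 1/8$ and $\delta < 1/2$. The key observation is that under $\cdf'_g$ the random variable $\fd[\xs'](\ys')$ is a deterministic function of $\xs'$, namely $c(\xs') := \fd[\xs'](g(\xs')) \in [0,1]$. So I only need to exhibit one measurable $\Sc \subset \Xc$ with $\Pr[\xs' \in \Sc] \geq 1/2$ on which the distribution of $c(\xs'_\Sc)$ is far from $\ucdf$ in $W_1$. By a simple pigeonhole, either $\Pr[c(\xs') \le 1/2] \ge 1/2$ or $\Pr[c(\xs') > 1/2] \ge 1/2$. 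Without loss of generality assume the first, and let $\Sc := \{x : c(x) \le 1/2\}$; then $c(\xs'_\Sc)$ is supported in $[0, 1/2]$, so its CDF $F$ satisfies $F(r) = 1$ for all $r \ge 1/2$. Consequently
\[
d_{W1}(F, \ucdf) = \int_0^1 |F(r) - r|\,dr \;\ge\; \int_{1/2}^1 (1 - r)\,dr = \frac{1}{8},
\]
using only the trivial bound $F(r) \ge 0$ on $[0, 1/2]$. Since $\Pr[\xs' \in \Sc] \ge 1/2 > \delta$ and $d_{W1}(F,\ucdf) \ge 1/8 > \epsilon$, this $\Sc$ witnesses the failure of $(\epsilon,\delta)$-adversarial group calibration for any $\epsilon < 1/8$, $\delta < 1/2$.

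I do not expect any step to be genuinely hard. The probabilistic-method construction is imported unchanged from Proposition~\ref{prop:impossible_deterministic}; the only new ingredient is the pigeonhole on whether $c(\xs')$ puts more mass above or below $1/2$, together with the elementary $W_1$ estimate for a CDF pinned to $1$ on half of the unit interval. The one place to be slightly careful is the measurability of $\Sc$: because $\fd$ and $g$ are fixed (measurable) functions, $c(\cdot) = \fd[\cdot](g(\cdot))$ is measurable, and the half-line preimage $\{c \le 1/2\}$ is measurable, so $\xs'_\Sc$ is well-defined as in Definition~\ref{def:adv_group_calibration}. Analogous statements for other distances (e.g., total variation) follow by replacing the final $W_1$ estimate by the corresponding lower bound, mirroring the remark at the end of the proof of Proposition~\ref{prop:impossible_deterministic}.
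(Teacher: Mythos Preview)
Your proposal is correct and follows essentially the same route as the paper: reuse the probabilistic-method construction of $\cdf'_g$ from Proposition~\ref{prop:impossible_deterministic}, then split $\Xc$ into the two sets where $\fd[x](g(x))$ lies above or below $1/2$, pick the half with mass at least $1/2$, and lower-bound the $W_1$ distance by $1/8$ using that the restricted CDF is pinned to $0$ (or $1$) on half of the unit interval. The only cosmetic difference is that the paper works with the ``$\geq 1/2$'' half (CDF vanishes on $[0,1/2]$) whereas you work with the ``$\leq 1/2$'' half (CDF equals $1$ on $[1/2,1]$); the resulting $\int_{1/2}^1 (1-r)\,dr = 1/8$ bound is the same, and your explicit measurability remark is a nice addition.
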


\begin{proof}[Proof of Proposition~\ref{prop:impossible_deterministic_advgroup}]
The proof is almost identical to the proof of Proposition~\ref{prop:impossible_deterministic}. We construct a $g: \Xc \to \Yc$ such that 
\[ \Pr_{\Dc \sim \cdf'}[T(\Dc, \fd) = \yes] \geq \kappa \]
We can pick the subgroup $\Sc_1, \Sc_2 \subset \Xc$ defined by 
\[ \Sc_1 = \lbrace x, \fd[x](g(x)) \geq 1/2 \rbrace, \Sc_2 = \lbrace x, \fd[x](g(x)) < 1/2 \rbrace \]
Because $\Sc_1 \cup \Sc_2 = \Xc$, so at least one of $\Sc_1, \Sc_2$ must have probability measure at least $1/2$ under $\cdf_X$. Without loss of generality assume it's $\Sc_1$. Then for $\tilde{X} = X \mid \Sc_1$ we have $\fd[\tilde{X}](g(x)) \geq 1/2$ almost surely, $\cdf_{\fd[\tilde{X}](g(x))}(r) = 0, \forall r \in [0, 1/2]$, which implies
\[ d_{W1}(\cdf_{\fd[\tilde{X}](g(x))}, \ucdf) \geq 1/8 \]
Therefore $\fd$ cannot be $(1/8, 1/2)$-adversarial group calibrated. 
\end{proof}



\subsection{Proofs for Section 4.2}
\label{sec:appendix_mono_paic}

\paicmono*
\begin{proof}[Proof of Theorem~\ref{thm:paicmono_imply_paic}]

Recall the convention that $\ys$ is the random variable that always has the conditional distribution $\cdf_{\ys \mid x}$, and $\rs$ is uniformly distributed in $[0,1]$. Denote 
\[ \err(x, y) := d_{W_1}\left(\cdf_{\fdr[x, \rs](y)}, \ucdf\right), \qquad \err(x) := d_{W_1}\left(\cdf_{\fdr[x, \rs](\ys)}, \ucdf\right) \]
Suppose $\fdr[x, \cdot](y)$ is a monotonically non-decreasing function for all $x, y$, then 
\[ \err(x, y) = d_{W_1}\left(\cdf_{\fdr[x, \rs](y)}, \ucdf\right) = \intr \lvert \fdr(x, r)(y) - r \rvert dr = \Eb\left[ \lvert \fdr(x, \rs)(y) - \rs \rvert \right] \]
So in general for arbitrary $\fdr$ we have 
\begin{align*}
    \err(x, y) = d_{W_1}\left(\cdf_{\fdr[x, \rs](y)}, \ucdf\right) \leq \intr \lvert \fdr(x, r)(y) - r \rvert dr  = \Eb\left[ \lvert \fdr(x, \rs)(y) - \rs \rvert \right]  \numberthis\label{eq:mpaic_to_paic_0}
\end{align*} 
In addition by Jensen's inequality we have $\err(x) \leq \Eb[\err(x, \ys)]$ so
\begin{align*}
    \err(x) \leq \Eb\left[ \lvert \fdr(x, \rs)(\ys) - \rs \rvert \right] \numberthis\label{eq:mpaic_to_paic_1}
\end{align*}
Suppose $\fdr$ is not $(\epsilon',\delta')$-PAIC, by definition we have
\[ \Pr[\err(\xs) \geq \epsilon'] > \delta' \] 
Define the notation 
\begin{align*}
    \Sc_b := \lbrace x \in \Xc,  \Eb\left[ \lvert \fdr(x, \rs)(\ys) - \rs \rvert \right]  \geq \epsilon' \rbrace 
\end{align*} 
by Eq.(\ref{eq:mpaic_to_paic_1}) we know that whenever $\err(x) \geq \epsilon'$ we have $x \in \Sc_b$, so we can conclude 
\begin{align*}
    \Pr[\xs \in \Sc_b] > \delta' \numberthis\label{eq:mpaic_to_paic_3}
\end{align*}
Whenever $x \in \Sc_b$, for any $\epsilon < \epsilon'$, we have
\begin{align*}
    \epsilon' &\leq \Eb[|\fdr(x, \rs)(\ys) - \rs|] \\
    &\leq \epsilon \Pr[|\fdr(x, \rs)(\ys) - \rs| < \epsilon] + \Pr[|\fdr(x, \rs)(\ys) - \rs| \geq \epsilon] \\
    &= \epsilon(1-\Pr[|\fdr(x, \rs)(\ys) - \rs| \geq \epsilon]) + \Pr[|\fdr(x, \rs)(\ys) - \rs| \geq \epsilon]
\end{align*}
where the second inequality is because $|\fdr(x, \rs)(\ys) - \rs| $ is bounded in $[0, 1]$. By simple algebra we get 
\begin{align*}
    \Pr[|\fdr(x, \rs)(\ys) - \rs| \geq \epsilon] \geq \frac{\epsilon'-\epsilon}{1 - \epsilon}     \numberthis\label{eq:mpaic_to_paic_2}
\end{align*}
We can combine Eq.(\ref{eq:mpaic_to_paic_3}) and Eq.(\ref{eq:mpaic_to_paic_2}) to get
\begin{align*}
    &\Pr[ \lvert \fdr(\xs, \rs)(\ys) - \rs \rvert \geq \epsilon']\\
    &= \Pr[ \lvert \fdr(\xs, \rs)(\ys) - \rs \rvert \geq \epsilon \mid \xs \in \Sc_b]\Pr[\xs \in \Sc_b] + \Pr[ \lvert \fdr(\xs, \rs)(\ys) - \rs \rvert \geq \epsilon \mid \xs \not\in \Sc_b]\Pr[\xs \not\in \Sc_b] \\
    &> \frac{\epsilon' - \epsilon}{1 - \epsilon} \delta'
\end{align*}
Therefore, $\fdr$ is not $(\epsilon, \frac{\epsilon' - \epsilon}{1 - \epsilon} \delta')$-mPAIC. To summarize, we have concluded that whenever $\fdr$ is not $(\epsilon', \delta')$-PAIC, for any $\epsilon < \epsilon'$, it is not  $(\epsilon, \frac{\epsilon' - \epsilon}{1 - \epsilon} \delta')$-mPAIC. This is equivalent to the statement: suppose $\fdr$ is $(\epsilon, \delta)$-mPAIC, then $\fdr$ is $(\epsilon', \delta \frac{1-\epsilon}{\epsilon'-\epsilon})$-PAIC. 
\end{proof}


\concentration*
\begin{proof}[Proof of Proposition~\ref{prop:concentration}]
Consider the sequence of Bernoulli random variables $b_i = \mathbb{I}(\left\lvert \fdr(x_i, r_i)(y_i) - r_i \right\rvert \geq \epsilon)$. Suppose $\Eb[b_i] = \delta$, then by Hoeffding inequality 
\[ Pr\left[\frac{1}{n}\sum_{i} b_i \geq \delta + \epsilon \right] \leq e^{-2\epsilon^2 n} \]
Plugging in $e^{-2\epsilon^2T}$ as $\gamma$ we have $ \epsilon = \sqrt{\frac{-\log \gamma}{2n}} $
\end{proof}

\subsection{Proofs for Section 5}
\label{sec:appendix_fairness}

\advgroup* 

\begin{proof}[Proof of Theorem~\ref{thm:pai_vs_calibration}]
Given a forecaster $\fs$ if for some $x, y$ we have $d_{\Wc_p}(\cdf_{\fs[x](y)}, \ucdf) < \epsilon$ then by definition of the Wasserstein distance we have
\[ \int_{r=0}^1 \lvert \fcdfd(r) - r \rvert^p \leq \epsilon^p \]
If $\fs$ is $(\epsilon,\delta)$-mPAIC with respect to $\Wc_p$.  Consider a partition of $\Xc$ into two sets: $\Xc_g$ where $\forall x \in \Xc_g$ we have
\[ \int_r \left\lvert \cdf_{\fs[x](y)}(r) - r\right\rvert^p dr \leq \epsilon^p \]
and $\Xc_b$ where the above property fails. We know $\Pr[\Xc_b] \leq \delta$. In general for any $x \in \Xc$, because $\fcdfd$ is a monotonically increasing function of $c$ bounded in $[0, 1]$, we have
\begin{align*}
     \int_r \left\lvert \fcdfd(r) - r \right\rvert^p dc \leq \int_r r^p dr = \frac{1}{p+1}
\end{align*}

Another useful identity we will use is 
\begin{align*} 
\fcdfs(r) = \Pr[\fs[\xs][\ys] \leq r] = \Eb_{x \sim \cdf_\xs}[\Eb[\mathbb{I}(\fs[x][\ys] \leq r)]] = \Eb_{x \sim \cdf_\xs}[\fcdf(r)] \numberthis\label{eq:1}
\end{align*}
Combining the above results we have for any $\tilde{X} = X \mid \mathcal{S}$
\begin{align*}
    d_{\Wc_p}(\cdf_{\fs[\tilde{X}](Y)}, \ucdf) &= \left( \intr \lvert \cdf_{\fs[\tilde{X}](Y)}(r) - r \rvert^p dr \right)^{1/p} \\
    &= \left( \intr \left\lvert \Eb_{x \sim \tilde{X}}[\cdf_{\fs[x](\Yb)}(r) - r] \right\rvert^p  dr \right)^{1/p} & (\text{Eq.\ref{eq:1}}) \\
    &\leq \Eb_{x \sim \tilde{X}}\left[ \left( \intr |\cdf_{\fs[x](\Yb)}(r) - r \rvert^p dr \right)^{1/p} \right]  & \textrm{(Jensen)} \\
    &= \Eb_{x \sim \tilde{X}}\left[\left( \intr |\cdf_{\fs[x](\Yb)}(r) - r \rvert^p dr \right)^{1/p}  \mid x \in \Xc_g\right] \Pr[\tilde{X} \in \Xc_g] + \\
    & \qquad \Eb_{x \sim \tilde{X}}\left[ \left( \intr |\cdf_{\fs[x](\Yb)}(r) - r \rvert^p dr \right)^{1/p}  \mid x \in \Xc_b\right] \Pr[\tilde{X} \in \Xc_b] & \textrm{(Conditional Expectation)} \\
    &\leq \epsilon \Pr[\tilde{X} \in \Xc_g] + (p+1)^{-1/p} \Pr[\tilde{X} \in \Xc_b] \\
    &\leq \epsilon \frac{\delta'-\delta}{\delta'} + (p+1)^{-1/p} \frac{\delta}{\delta'} & \textrm{(} \epsilon \leq (p+1)^{-1/p} \textrm{)}
\end{align*}
If we don't care about constants too much, we can further simplify above by 
\[ \epsilon \frac{\delta'-\delta}{\delta'} + (p+1)^{-1/p} \frac{\delta}{\delta'} \leq \epsilon + \delta/\delta' \]
\end{proof}

\subsection{Proofs for Section 6}
\label{sec:appendix_decision_making}

\decision* 

\label{sec:appendix_decision_making_proof}
\begin{proof}[Proof of Theorem~\ref{thm:markov}]
Choose any $x \in \Xc$, $h \in \Hc$ and $r \in (0, 1)$. For some action $a$ assume $l(x, \cdot, a)$ is monotonically non-decreasing. We consider the situation where $y < \fd[x]^{-1}(1-r)$, or equivalently $\fd[x](y) < 1-r$, then 
\[ l_\fd(x) = \int_{y' \in \Yc} l(x, y', a) d\fd[x](y') \geq \int_{y' \geq y} l(x, y', a) d\fd[x](y') \geq l(x, y, a) \int_{y' \geq y}  d\fd[x](y') \geq r l(x, y, a) \]
because the above is true for any $a \in \Ac$, it must also be true for the action $\phi_\fd(x)$. On the other hand, assume $l(x, \cdot, a)$ is monotonically non-increasing, then by a similar argument we get whenever $\fd[x](y) > r$ we have
\[ l_\fd(x) \geq r l(x, y, a) \]

Consider the set $\Sc_r, \Mc_r, \bar{\Mc}_r \subset \Xc \times \Yc \times \Hc$, defined by 
\begin{align*}
    \Sc_r = \lbrace x, y, h \mid l_h(x) \leq r l(x, y, a)  \rbrace, \quad\Mc_r = \lbrace x, y, h \mid \fd[x](y) \leq r  \rbrace, \quad \bar{\Mc}_r = \lbrace x, y, h \mid \fd[x](y) \geq 1-r  \rbrace \\
\end{align*} 
The above results would imply $\Sc_r \subset \Mc_r \cup \hat{\Mc}_r$. But we know that
\[ \Pr[\xs,\ys,\fs \in \Sc_r] \leq \Pr[\xs,\ys, \fs \in \Mc_r \cup \hat{\Mc}_r] \leq 2r \]
taking $k=1/r$ gives us the desired statement.

If $\fs$ is individually calibrated, then it is also adversarial group calibrated by Theorem~\ref{thm:pai_vs_calibration}. Define a function $\zeta: \Xc \times \Hc \to \lbrace 0, 1 \rbrace$ that represents whether $l$ is monotonically non-decreasing or non-increasing in $\Yc$. Then 
\begin{align*}
&\Pr[\xs,\ys,\fs \in \Sc_r] \\
&\leq \Pr[\xs,\ys, \fs \in \Mc_r \mid \zeta(\xs, \fs) = 0]\Pr[\zeta(\xs, \fs) = 0] + \Pr[\xs,\ys, \fs \in \Mc_r \mid \zeta(\xs, \fs) = 1]\Pr[\zeta(\xs, \fs) = 1] \\
&\leq r
\end{align*}
\end{proof}

\subsection{Proofs for Appendix}
\eqece*
\begin{proof}[Proof of Proposition~\ref{prop:eq_ece}]
This proposition depends on the following Lemma. 
\begin{lemma}
\label{lemma:inverse_norm}
Let $\phi: [0, 1] \to [0, 1]$ be a monotonic differentiable function such that $\phi(0)=0$ and $\phi(1)=1$. Let $\psi(x) = x$, then for any $1 \leq s \leq +\infty$ we have
\[ \intc \left\lvert \phi^{-1}(c) - \psi(c)\right\rvert^s dc = \intr\left\lvert \phi(r) - \psi(r)\right\rvert^s dr \]
\end{lemma}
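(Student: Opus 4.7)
The plan is to use the layer-cake representation
\[ \int_0^1 |f(x)|^s \, dx = s \int_0^\infty t^{s-1} \lambda\bigl(\{x : |f(x)| > t\}\bigr) \, dt \]
for nonnegative integrands (which extends to $s=\infty$ by a separate $\sup$ argument) to reduce the lemma to the pointwise claim: for every $t > 0$,
\[ \lambda\bigl(\{r : |\phi(r) - r| > t\}\bigr) = \lambda\bigl(\{c : |\phi^{-1}(c) - c| > t\}\bigr). \]
Given the boundary conditions $\phi(0)=0$, $\phi(1)=1$ and monotonicity, I would assume $\phi$ is strictly increasing so that $\phi^{-1}$ is well defined and continuous.

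To verify the level-set equality, I would analyze the connected-component structure of the open set $A_t := \{r \in (0,1) : |\phi(r) - r| > t\}$. Writing $A_t$ as a disjoint union of open intervals, I would show that on each component $(a,b)$ the continuous function $g(r) = \phi(r) - r$ has constant sign: a sign change would force $g(r^*) = 0$ at some interior point by the intermediate value theorem, contradicting $|g(r^*)| > t > 0$. Since the endpoints $a,b$ cannot be $0$ or $1$ (where $g = 0 \ne t$), continuity gives $|g(a)| = |g(b)| = t$, and constancy of sign forces $g(a) = g(b)$. The key identity that falls out is
\[ \phi(b) - \phi(a) = b - a, \]
i.e.\ the vertical extent of the graph of $\phi$ over the component equals its horizontal extent.

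Finally, I would use the monotone continuous map $\phi$ to set up a bijection $(a,b) \mapsto (\phi(a), \phi(b))$ between the components of $A_t$ and those of $B_t := \{c : |\phi^{-1}(c) - c| > t\}$. Inclusion of the image in $B_t$ is direct: for $c = \phi(r)$ with $r \in (a,b)$, $|\phi^{-1}(c) - c| = |r - \phi(r)| > t$. Maximality (i.e.\ that $(\phi(a), \phi(b))$ really is a whole component of $B_t$) follows from noting that $|\phi^{-1}(\phi(a)) - \phi(a)| = |a - \phi(a)| = t$, so the endpoints lie outside $B_t$. Matching components then have equal lengths $b - a = \phi(b) - \phi(a)$, summing gives $\lambda(A_t) = \lambda(B_t)$, and the layer-cake formula completes the proof. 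The case $s = \infty$ reduces to observing that $|\phi(r) - r|$ and $|\phi^{-1}(c) - c|$ take the same set of values, since $(r, \phi(r)) \mapsto (\phi(r), r)$ preserves the diagonal distance.

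The main subtle point I expect is the bijection-of-components step, specifically ruling out the possibility that $(\phi(a), \phi(b))$ is strictly contained in a larger component of $B_t$; this is exactly where the computation $|\phi^{-1}(\phi(a)) - \phi(a)| = t$ (not $>t$) is essential and where strict monotonicity of $\phi$ is used.
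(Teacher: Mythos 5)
Your proof is correct and takes a genuinely different route from the paper's. The paper's argument is a change-of-variables computation: it decomposes $[0,1]$ into intervals whose endpoints are fixed points of $\phi$ and on which $\phi(x)-x$ has constant sign, substitutes $y=\phi(x)$ in the second integral, and then observes that the resulting integrand $(\phi(x)-x)^s(\phi'(x)-1)$ is the derivative of $(\phi(x)-x)^{s+1}/(s+1)$, which vanishes at the fixed-point endpoints. You instead go through the layer-cake formula and reduce to the purely measure-theoretic claim $\lambda(A_t)=\lambda(B_t)$, then match components of the two superlevel sets via $\phi$ and use the identity $\phi(b)-\phi(a)=b-a$ (forced by $g(a)=g(b)=\pm t$) to see that paired components have equal length. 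The two approaches exploit the same underlying structure (fixed points of $\phi$ as the hinge), but yours buys something real: it needs only continuity and strict monotonicity, not differentiability, and it handles $s=\infty$ cleanly since equal superlevel-set measures for all $t$ give equal essential suprema directly. The paper's proof is shorter but uses $\phi'$ explicitly and, as written, implicitly assumes the set of fixed points of $\phi$ is finite, which a differentiable $\phi$ need not satisfy; your level-set decomposition sidesteps that issue since the components of an open set automatically form a countable family. One small thing to tighten: you establish that $(a,b)\mapsto(\phi(a),\phi(b))$ sends components of $A_t$ onto components of $B_t$, but you should say a word about surjectivity — it follows by running the same argument with $\phi^{-1}$ in place of $\phi$, which sends components of $B_t$ to components of $A_t$ and is inverse to your map.
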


First observe that by the monotonicity of $\fcdfs$ we have
\[ \Pr[\fs[\xs](\ys) \leq c] = \intr \mathbb{I}(\fcdfs \leq c) dr = \fcdfs^{-1}(c) \]
We get
\begin{align*}
    \ece(\fs) &= \int_{c=0}^1 \left\lvert \Pr[\fs[\xs](\ys) \leq c] - c \right\rvert dc = \int_{c=0}^1 \left\lvert \fcdfs^{-1}(c) - c \right\rvert dc \\
    &= \intr  \left\lvert \fcdfs(r) - r \right\rvert dr = d_{W1}(\fcdfs, \ucdf)
\end{align*}


Finally we prove Lemma~\ref{lemma:inverse_norm}.
\begin{proof}[Proof of Lemma~\ref{lemma:inverse_norm}]
Let $[a, b]$ be an interval where $\phi(x)-x$ does not change sign, and $f(a)=a$, $f(b)=b$. Without loss of generality, assume it is positive. Then 
\begin{align*} 
\int_{x=a}^b \lvert \phi(x) - x \rvert^s dx &- \int_{y=a}^b ( f^{-1}(y) - y )^s dy 
= \int_{x=a}^b ( \phi(x) - x )^s dx  - \int_{x=a}^b ( -x + \phi(x) )^s f'(x) dx \\
&= \int_{x=a}^b (\phi(x) - x )^s (f'(x) - 1) dx = \frac{(\phi(x) - x)^{s+1}}{s+1} \vert_a^b = 0
\end{align*}
Let $0=a_1 < a_2 < \cdots < a_n=1$ be a set of points where $f(a_i)=a_i$, and $f$ does not change sign between $[a_i, a_{i+1}]$. Then we have
\begin{align*}
    \int_{x=0}^1 & \lvert \phi(x) - x \rvert^s dx - \int_{y=0}^1 \lvert f^{-1}(y) - y \rvert^s dy \\ 
    &= \sum_i \left( \int_{x=a_i}^{a_{i+1}} \lvert \phi(x) - x \rvert^s dx - \int_{y=a_i}^{a_{i+1}} \lvert f^{-1}(y) - y \rvert^s dy \right) = 0
\end{align*}
\end{proof}
\end{proof}

\end{document}